\title{On the Complexity of the Grounded Semantics for Infinite Argumentation Frameworks}
\author{Uri Andrews\thanks{This research was supported by the NSF grant DMS-2348792.}
	\institute{Department of Mathematics\\ University of Wisconsin-Madison}
	\email{andrews@math.wisc.edu}
	\and
	Luca San Mauro\thanks{San Mauro is a member of INDAM-GNSAGA.}
	\institute{Dipartimento di Ricerca e Innovazione Umanistica\\ University of Bari, Italy}
	\email{luca.sanmauro@gmail.com}
}
\def\BibTeX{{\rm B\kern-.05em{\sc i\kern-.025em b}\kern-.08em
		T\kern-.1667em\lower.7ex\hbox{E}\kern-.125emX}}
\newtheorem{theorem}{Theorem}
\newtheorem{definition}[theorem]{Definition}
\newtheorem{lemma}[theorem]{Lemma}
\newtheorem{observation}[theorem]{Observation}
\newtheorem{defn}[theorem]{Definition}
\newtheorem{example}[theorem]{Example}
\newtheorem{notation}[theorem]{Notation}
\newtheorem{corollary}[theorem]{Corollary}
\providecommand{\CK}{{\omega_1^{\text{CK}}}}
\providecommand{\CKD}{{\CK(d)}}
\providecommand{\A}{\mathcal{A}}
\providecommand{\F}{\mathcal{F}}
\providecommand{\G}{\mathcal{G}}
\providecommand{\C}{\mathcal{C}}
\providecommand{\T}{\mathcal{T}}
\providecommand{\demp}{\mathbf{0}}
\providecommand{\uswp}{$\Pi^1_1$ uniformly self-witnessing by paths\xspace}
\providecommand{\badadmissible}{self-defending\xspace}
\renewcommand{\hat}{\widehat}
\renewcommand{\phi}{\varphi}
\providecommand{\att}{\rightarrowtail}
\providecommand{\natt}{\not\rightarrowtail}
\providecommand{\nat}{\mathbb{N}}
\providecommand{\mran}{\text{mran}}
\providecommand{\rank}{\text{rank}}
\begin{document}
	\maketitle

	
	\begin{abstract}
		Argumentation frameworks, consisting of arguments and an attack relation representing conflicts, are fundamental for formally studying reasoning under conflicting information. We use methods from mathematical logic, specifically computability and set theory, to analyze the grounded extension, a widely-used model of maximally skeptical reasoning, defined as the least fixed-point of a natural defense operator. Without additional constraints, finding this fixed-point requires transfinite iterations. We identify the exact ordinal number corresponding to the length of this iterative process and determine the complexity of deciding grounded acceptance, showing it to be maximally complex. This shows a marked distinction from the finite case where the grounded extension is polynomial-time computable, thus simpler than other reasoning problems explored in formal argumentation.
	\end{abstract}
	

	\section{Introduction}
	Over the past three decades, formal argumentation has established itself as a prominent research area within Artificial Intelligence, owing to its versatility in addressing various reasoning tasks. These include nonmonotonic reasoning, multi-agent systems, rule-based systems, and the analysis of debates or dialogues. Formal argumentation provides a unifying framework for representing diverse reasoning approaches, ranging from highly skeptical to more permissive forms of inference (for a comprehensive introduction to this area, see the handbook \cite{Baroni2018-BARHOF}).
	
	At the heart of formal argumentation lies Dung's \emph{abstract argumentation frameworks} (AFs) \cite{dung1995acceptability}, which are modeled as directed graphs, where nodes correspond to arguments, and directed edges represent the attack relations between them.  AFs serve as a common foundational core across various reasoning systems in formal argumentation, with many extensions and refinements, e.g.\ \cite{cayrol2013bipolarity,baumeister2018verification,fazzinga2015complexity}, building upon their structure. Moreover, the simplicity and abstraction of AFs make them a natural testing ground for developing and evaluating ideas before extending them to more sophisticated reasoning paradigms.
	

	
	Although most of the research has focused on finite AFs, the study of infinite AFs is a natural and increasingly popular topic (see, e.g., \cite{baumann2015infinite,baroni2013automata,caminada2014grounded-inf,andrews2024NMR,andrews2024FCR}). A practical reason for studying infinite AFs is that rule-based argumentation formalisms (such as \texttt{ASPIC} \cite{caminada2007evaluation} and \texttt{ASPIC}$^+$ \cite{modgil2013general}) are evaluated using Dung-style AFs and can generate infinitely many arguments even starting from a finite set of rules \cite{strass2013approximating,caminada2007evaluation}. As a theoretical foundation, infinite AFs enable the modeling of phenomena that finite frameworks cannot capture: for example, Dung \cite[\S 3.1]{dung1995acceptability} showed that infinite AFs are needed to model solutions to $n$-person games. For additional examples of the applications and relevance of infinite AFs, we refer the reader to \cite{baroni2013automata}. In the infinite setting, where conclusions may not always be attained in finite time, focus of computational research into reasoning problems shifts to finding appropriate limiting procedures.
	
	
	In this paper, we use methods from mathematical logic, specifically computability theory and set theory, to examine the \emph{grounded extension} in the context of infinite AFs. Introduced by 
	Dung \cite{dung1995acceptability},  the grounded extension identifies the unavoidable conclusions within an argumentation framework. The grounded extension, especially in the infinite context, plays a role in several applications of argumentation theory. For example, in cooperative games, the set of outcomes of a game forms an infinite AF where the grounded extension is exactly the supercore of the cooperative game (see \cite{coopgames,roth}). Our results in this paper apply directly to the problem of determining the supercore of cooperative games.
	
	The grounded extension is formally defined as the smallest fixed-point of the defense function (see Definition \ref{def:characteristic function}), but can also be understood as the outcome of an \emph{iterative generative process}: This process begins by accepting arguments that are unanimously approved (i.e., unattacked). Once these are accepted, any argument attacked by an accepted one is invalidated and removed. Repeating this procedure yields a collection of arguments which should be accepted in (almost) any reasoning setting. This provides a natural limiting procedure to compute the grounded extension.

	For finite AFs, the behavior of this iterative process is straightforward: If the AF considered contains $n$ arguments, then the process stabilizes (and thus terminates) after at most $n$ steps. It follows that  determining membership in the grounded extension is computationally tractable, i.e., solvable in polynomial time. However, for infinite AFs, the situation is markedly different. The iterative generative process may run for infinitely many steps before stabilizing. Dung \cite{dung1995acceptability} noted this, as well as noting that for \emph{finitary} AFs---those where each argument is attacked by only finitely many others--- the process is guaranteed to terminate in $\omega$ many steps (that is, after doing one step for each natural number). Beyond this special case, however, the general behavior of the iterative process remains more complex, and Dung left open the question of how long it might extend. 
	
	In fact, it was not immediately clear that grounded extensions exist 
	without assuming the power of full transfinite induction, which is known to be equivalent to the Axiom of Choice. In work exploring the theoretical aspects of the reasoning notions of formal argumentation,  
	Spanring \cite{spanringThesis} demonstrated that the existence and uniqueness of grounded extensions in infinite AFs follow from Zermelo–Fraenkel set theory (ZF), without invoking the Axiom of Choice.\footnote[2]{In contrast, for other semantics widely studied in formal argumentation---such as the  preferred and the naive ones---Spanring \cite{spanring2014axiom} proved that the
		existence of extensions in these semantics is equivalent to the axiom of choice
		over ZF.} Yet, Spanring's proof is  nonconstructive,  relying on Hartog’s Lemma, which ensures that no set is larger than all ordinals. As a result, the exact length of the iterative process remained undetermined.
	
	We provide a new approach to understanding the grounded extension, rooted in computability theory. This approach allows to determine the exact length of the iterative generative process and to determine the exact computational complexity of the grounded extension. Furthermore, combining this analysis with a set-theoretic forcing argument, we show that not only the Axiom of Choice is unnecessary, but the very weak foundation of Kripke–Platek set theory suffices to define the grounded extension.

	We formalize the question of the length of the iterative generative process by defining, for any given argumentation framework $\F$, its \emph{grounding ordinal} (see Definition \ref{def:grounding ordinal}). We show that the grounding ordinal of a countable AF can be an arbitrarily large countable ordinal. In contrast, for computable AFs---which represent a very natural setting: e.g., many infinite AFs generated in applications are computable---we give a much lower, more concrete, and sharp upper bound.
	
	Beyond analyzing the length of the iterative process, we also determine the algorithmic complexity of the grounded extension itself.
	
	
	\begin{theorem}
		For computable AFs, the decision problem  of determining
		membership in the grounded extension is $\Pi^1_1$-complete.\footnote[4]{This result establishes the complexity of the skeptical acceptance of arguments with respect to the complete semantics for infinite AFs, providing a proof of a theorem announced but left unproven in \cite{andrews2024FCR}.}
	\end{theorem}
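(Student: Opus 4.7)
The grounded extension of an AF $\F$ is the least fixed point of the monotone defense operator $\Gamma(S) = \{x : \forall b\,(b \att x \to \exists c\,(c \att b \wedge c \in S))\}$. By the Knaster--Tarski characterization of least fixed points, an argument $a$ belongs to the grounded extension iff $a$ is in every $\Gamma$-closed set, that is,
\[
\forall S\,\bigl(\Gamma(S) \subseteq S \to a \in S\bigr).
\]
For a computable AF the predicate ``$\Gamma(S) \subseteq S$'' is arithmetic in $S$, so the whole formula is $\Pi^1_1$. (Alternatively, one can extract this from the paper's earlier result that the grounding ordinal of a computable AF is at most $\CK$, together with the standard uniformity of the hyperarithmetic stages $\Gamma^\alpha(\emptyset)$ for $\alpha < \CK$.)

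\textbf{Lower bound.} I reduce the canonical $\Pi^1_1$-complete problem---well-foundedness of a computable tree $T \subseteq \omega^{<\omega}$---to grounded acceptance. Given $T$, I construct a computable AF $\F_T$ whose arguments are $\{a_\sigma : \sigma \in T\} \cup \{b_{\sigma,n} : \sigma \frown n \in T\}$, with attacks
\[
b_{\sigma,n} \att a_\sigma \qquad \text{and} \qquad a_{\sigma \frown n} \att b_{\sigma, n}
\]
and no others. Because the unique attacker of $b_{\sigma,n}$ is $a_{\sigma \frown n}$, the recursion defining $\Gamma$ collapses to the equivalence: $a_\sigma$ lies in the grounded extension of $\F_T$ iff $a_{\sigma \frown n}$ does, for every child $\sigma \frown n$ of $\sigma$ in $T$. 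A transfinite induction on the rank of the subtree of $T$ at $\sigma$ (for the well-founded direction), paired with an induction on the iteration stages of $\Gamma$ (for the converse), then yields that $a_\sigma$ is in the grounded extension iff the subtree of $T$ at $\sigma$ is well-founded; specializing to $\sigma = \varnothing$ finishes the reduction.

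\textbf{Main obstacle.} The delicate step is the non-well-founded direction. If $T$ has an infinite branch $\sigma_0 \subset \sigma_1 \subset \cdots$, I must show that no $a_{\sigma_i}$ is ever added to $\Gamma^\alpha(\emptyset)$ at any ordinal stage $\alpha$. The standard move is to pick the least such $\alpha$ and a corresponding $i$; then the attacker $b_{\sigma_i, \sigma_{i+1}}$ must already be defeated strictly before $\alpha$, which forces $a_{\sigma_{i+1}}$ into the iteration strictly before $\alpha$, contradicting the minimality of $\alpha$.
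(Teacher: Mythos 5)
Your argument is correct and, at its core, it is the paper's own proof: your upper bound is exactly the paper's Observation~\ref{obs:grounded extensions are Pi11} (membership in the least fixed point is expressed by $\forall X\,(f_\F(X)=X \rightarrow a\in X)$, whose matrix is arithmetic in $X$ when the attack relation is computable), and your gadget $\F_T$, with attacks $b_{\sigma,n}\att a_\sigma$ and $a_{\sigma^\smallfrown n}\att b_{\sigma,n}$, is precisely the AF of Example~\ref{exampleAnyComputableOrdinal}; your equivalence ``$a_\sigma$ is grounded iff the subtree of $T$ at $\sigma$ is well-founded,'' proved by the double induction you sketch, is Theorem~\ref{thm:characterizeGsInExample} (the paper proves the sharper level-by-level statement that $\G_{\beta+1}$ consists exactly of the $a_\sigma$ with $\rank(\sigma)\leq\beta$, which it also needs for its grounding-ordinal results; your minimal-stage argument for the ill-founded direction is sound, noting the minimal stage is necessarily a successor). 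The one genuine difference is how the hardness is packaged: you reduce per instance, sending the $n$-th tree $\T_n$ of Kleene's normal form (Theorem~\ref{Kleene}) to a freshly built AF $\F_{\T_n}$, whereas the paper (Theorem~\ref{make it big} via Lemma~\ref{allTreesTogether}) glues the gadgets for \emph{all} computably enumerable trees into one fixed computable AF and reduces into that single framework. The paper's extra step buys a strictly stronger conclusion---a single computable AF whose grounded extension is a $\Pi^1_1$-complete set---which your per-instance reduction does not yield, though your version suffices for the stated theorem.

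One wrinkle you should patch. As written, the arguments of $\F_T$ are indexed by the elements of $T$, but Definition~\ref{indexing of computable AFs} requires the argument set to be presented as $\{a_n : n\in\nat\}$ with the attack relation computable on indices, and your reduction needs an index for $\F_{\T_n}$ to be \emph{uniformly computable} in $n$; enumerating the elements of $\T_n$ in order is not uniform in general and fails outright when $\T_n$ is finite. The fix is routine: index the arguments by all of $\nat^{<\nat}$ (and all pairs $(\sigma,n)$), declaring the attacks only when the relevant strings lie in $\T_n$, so that strings outside the tree become isolated arguments that neither attack nor are attacked and hence do not disturb the equivalence at the root. The paper confronts the analogous issue---a c.e.\ rather than computable domain---inside the proof of Lemma~\ref{allTreesTogether}, resolving it with a computable bijection onto the c.e.\ set of arguments; your writeup needs a sentence doing the same.
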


	
	$\Pi^1_1$-completeness (see Definition \ref{def:Pi11 completeness}) is the natural analog in the infinite setting of co-NP-completeness. $\Pi^1_1$-completeness of the grounded extension shows that in the infinite setting, the grounded extension is as complex as the other computational problems explored for infinite AFs \cite{andrews2024NMR}. This result highlights a stark contrast with the finite case, where the grounded extension is solvable in polynomial time, and thus simpler than most other reasoning problems explored in formal argumentation.
	Moreover, we establish that a single computable AF suffices to achieve this hardness:
	
	\begin{theorem}
		There is a computable AF $\F$ so that the grounded extension of $\F$ is a $\Pi^1_1$-complete set.
	\end{theorem}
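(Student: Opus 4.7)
The plan is to bootstrap from the preceding theorem via a uniformity argument. First, I would fix a standard $\Pi^1_1$-complete set $W$ (e.g., the set of indices of well-founded computable trees) and inspect the reduction underlying the previous theorem to extract a total computable function $e \mapsto (\F_e, a_e)$ that produces, for each $e \in \nat$, an index of a computable AF $\F_e$ together with a distinguished argument $a_e \in \F_e$, such that $e \in W$ if and only if $a_e$ lies in the grounded extension of $\F_e$.

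Next, I would assemble the $\F_e$'s into a single computable AF $\F$ by taking their disjoint union: the arguments of $\F$ are pairs $(e,v)$ with $v$ an argument of $\F_e$, and $(e,v) \att (e',v')$ precisely when $e = e'$ and $v$ attacks $v'$ inside $\F_e$. Uniform computability of the family $(\F_e)_{e \in \nat}$ immediately gives that $\F$ itself is a computable AF.

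The technical heart of the argument is a lemma stating that the grounded extension commutes with disjoint unions: for $\F = \bigsqcup_e \F_e$, the grounded extension of $\F$ equals the disjoint union of the grounded extensions of the $\F_e$'s. I would prove this by transfinite induction on the stages of the iterative generative process, using that the defense operator respects the component decomposition (attacks do not cross components), and that at limit stages ascending unions commute with disjoint unions. The induction may need to be carried well past the supremum of the individual grounding ordinals, but once each component stabilizes it remains fixed. Granted the lemma, the computable map $e \mapsto (e,a_e)$ is a many-one reduction from $W$ to the grounded extension of $\F$, proving the latter is $\Pi^1_1$-hard; combined with the $\Pi^1_1$ upper bound from the previous theorem, this yields $\Pi^1_1$-completeness.

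The main obstacle is the uniformity step: the proof of the previous theorem must be inspected to confirm that the AF $\F_e$ witnessing the reduction for input $e$ is produced by a \emph{single, total} computable procedure in $e$, rather than merely being shown to exist for each $e$ separately. Such uniformity is typically implicit in computability-theoretic reductions, but in the present infinite-AF setting, where the coding of $\Pi^1_1$ objects into attack relations is intricate, explicit verification is required. A secondary but straightforward care point is formally checking the limit-stage step of the disjoint-union lemma.
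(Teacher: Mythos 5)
Your proposal matches the paper's own proof in all essentials: the paper builds exactly this disjoint union (over the AFs $\F_\T$ of Example \ref{exampleAnyComputableOrdinal}, one for each computably enumerable tree $\T$, with the root argument $a_{\lambda_\T}$ as the distinguished element), handles the uniformity and the computability of the union in Lemma \ref{allTreesTogether}, and invokes the observation that the stages $\G_\alpha$ restrict to components of a disjoint union before applying Kleene's theorem. The two care points you flag --- uniformity of the construction and the disjoint-union lemma --- are precisely the two steps the paper addresses explicitly.
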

	
	This implies that the Turing degree of a $\Pi^1_1$-complete set can be the Turing degree of a grounded extension in a computable argumentation framework.  This naturally leads to the question: What is the precise characterization of the collection of Turing degrees of grounded extensions in computable AFs? It is natural to conjecture that every Turing degree of a $\Pi^1_1$ set could arise as the degree of the grounded extension of some computable AF; we prove that this is not the case.
	

	The rest of the paper is organized as follows. In Section \ref{sec:background}, we review the necessary background on both argumentation theory and computability theory. In Section \ref{sec:maximially long and hard}, we give examples of computable AFs with grounding ordinal equal to any $\alpha\leq \CK$ (see Definition \ref{definition: omega1CK}), as well as a computable AF where the grounded extension is $\Pi^1_1$-complete. We then shift to the problem of showing that $\CK$ is an upper bound for the grounding ordinal of a computable AF.
	In Section \ref{sec:tree rank analysis of groundedness}, we introduce our main technical tool towards this goal, which is a tree-rank analysis of groundedness or non-groundedness in an argumentation framework. In Section \ref{sec:main results}, we use this analysis to conclude that $\CK$ is an upper bound for the grounding ordinal of any computable AF. In Section \ref{sec:conclusions for non-computable AFs}, we see that we can draw conclusions about noncomputable and even uncountable AFs. Here we establish that Kripke-Platek set theory suffices to define the grounded extension.
	Finally, in Section \ref{sec:Turing degrees missing}, we show that there are $\Pi^1_1$ Turing degrees which are not the degree of a grounded extension in a computable argumentation framework. 
	Due to lack of space, some technical proofs have been relegated to an appendix.

	\section{Background}\label{sec:background}
	In this section, we provide a concise overview of key concepts in Dung-style argumentation theory, establish the basic terminology of computability theory, and introduce computable argumentation frameworks as a natural way to integrate these two domains. We assume the reader is acquainted with a few fundamental concepts from mathematical logic (such as ordinals and Turing degrees) as presented, e.g., in \cite{rogers1987theory}.
	
	\subsection{Argumentation theory background} \label{sec: back argumentation}
	
	An \emph{abstract argumentation framework} (AF) $\F$  is a pair $(A_\F,R_\F)$ consisting of  a set $A_\F$ of arguments and an attack relation $R_\F\subseteq A_\F\times A_\F$. If an argument $a$ attacks an argument $b$, we write $a\att b$ instead of $(a, b)\in R_\F$.   We call collections of arguments $S\subseteq A_\F$ \emph{extensions}. For an extension $S$, the symbols $S^+$ and $S^-$ denote, respectively, the arguments that $S$ attacks and the arguments that attack $S$: 
	$S^+=\{x : (\exists y \in  S)(y \att x)\}$ and
	$S^-=\{x : (\exists y \in  S)(x \att y) \}$.
	
	\begin{definition}\label{def:characteristic function}
		$S\subseteq A_{\F}$ \emph{defends} an argument $a$, if  any argument that attacks $a$ is attacked by some argument in $S$ (i.e., $\{a\}^-\subseteq S^+$). The \emph{defense function}  of $\F$ is the  following mapping $f_\F$ which sends subsets of $A_\F$ to subsets of $A_\F$: 
		\[
		f_\F(S) := \{x : \text{ $x$ is defended by $S$} \}.
		\]
	\end{definition}
	
	
	Dung \cite[Definition 20]{dung1995acceptability} introduced the grounded extension to capture the collection of arguments which should be accepted by a maximally skeptical reasoner. Dung defined the \emph{grounded extension} $\G$ as the least fixed-point of the defense function and noted that, for any AF, the grounded extension exists and is guaranteed to be conflict-free \cite[Theorem 25]{dung1995acceptability}, i.e., $a\natt b$ for every $a,b\in \G$. A fundamental property of the  grounded extension is that it can be approximated from below, by iterating along the ordinals the generative process sketched in the Introduction (see Definition \ref{defn:GroundingSets}).

	
	
	
	Recall that the ordinals, introduced to mathematics by Georg Cantor in 1883, extend the counting numbers into the infinite. We will use the following two properties of the collection of ordinals:
	
	\begin{theorem}\label{properties of ordinals}
		The collection of ordinals is linearly ordered, and every non-empty subset of the collection of ordinals has a least element.
	\end{theorem}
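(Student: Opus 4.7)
The plan is to work from the standard von Neumann definition: an ordinal is a transitive set well-ordered by the membership relation $\in$. The two claims will be handled in sequence, with the trichotomy law doing most of the work.

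For the first claim (linear ordering), I would prove trichotomy: for any ordinals $\alpha$ and $\beta$, exactly one of $\alpha \in \beta$, $\alpha = \beta$, or $\beta \in \alpha$ holds. The key move is to consider $\gamma = \alpha \cap \beta$ and verify two things. First, $\gamma$ is itself an ordinal: it is transitive because intersections of transitive sets are transitive, and it inherits the well-ordering by $\in$ from $\alpha$ (or $\beta$). Second, for any ordinal $\delta$ and any proper transitive subset $\delta' \subsetneq \delta$ closed downward in the $\in$-ordering, we have $\delta' \in \delta$; this is a standard lemma saying that proper initial segments of an ordinal are themselves elements. Applying this to $\gamma \subseteq \alpha$ and $\gamma \subseteq \beta$: if $\gamma \neq \alpha$ and $\gamma \neq \beta$ then $\gamma \in \alpha$ and $\gamma \in \beta$, so $\gamma \in \alpha \cap \beta = \gamma$, contradicting the Axiom of Foundation. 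Hence $\gamma \in \{\alpha, \beta\}$, which immediately yields one of the three trichotomy cases.

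For the second claim (existence of least elements), given a nonempty collection $C$ of ordinals, fix any $\alpha \in C$. If $\alpha$ is not already a least element of $C$, then by trichotomy the set $\alpha \cap C = \{\beta \in C : \beta \in \alpha\}$ is a nonempty subset of $\alpha$. Since $\alpha$ is well-ordered by $\in$, this subset has an $\in$-least element $\beta_0$, and trichotomy together with transitivity shows $\beta_0$ is least in all of $C$. If $C$ is a proper class rather than a set, the same argument still works: pick any $\alpha \in C$, and replace $C$ by the set $\alpha \cap C$ before extracting the minimum.

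The technical content is concentrated in the trichotomy step, specifically in the lemma that a proper transitive initial segment of an ordinal is a member of that ordinal. I expect that to be the main obstacle to write cleanly, since it requires invoking the well-ordering of $\alpha$ to locate the $\in$-least element of $\alpha \setminus \gamma$ and then showing it equals $\gamma$. Everything else — verifying that $\alpha \cap \beta$ is transitive and well-ordered, and the reduction of the minimum problem to well-ordering inside a single ordinal — is routine once trichotomy is in hand.
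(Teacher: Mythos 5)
The paper does not actually prove this statement: it is presented purely as recalled classical background on ordinals (attributed to Cantor's theory), with no argument supplied, so there is no proof of record to compare yours against. Your proposal is the standard and correct development for von Neumann ordinals: trichotomy via $\gamma=\alpha\cap\beta$ together with the lemma that a proper transitive (equivalently, downward-closed) subset of an ordinal is an element of it, followed by extracting the minimum of a nonempty class $C$ from the well-ordering of a single $\alpha\in C$ restricted to $\alpha\cap C$. Your remark that the argument survives when $C$ is a proper class is apt, since the ``collection of ordinals'' in the statement is indeed a proper class. Two minor glosses, neither a gap: the final step of trichotomy (``$\gamma\in\{\alpha,\beta\}$ immediately yields one of the three cases'') needs one more application of the initial-segment lemma (if $\gamma=\alpha\neq\beta$ then $\alpha\subsetneq\beta$, hence $\alpha\in\beta$), and the ``exactly one'' part needs the observation that $\alpha\in\beta\in\alpha$ would force $\alpha\in\alpha$ by transitivity, contradicting well-foundedness. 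With those spelled out, the proof is complete.
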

	
	Recall that the ordinal $\omega$ is the least infinite ordinal. Recall also that every ordinal $\alpha$ is either the successor of another ordinal $\beta$, i.e., $\alpha=\beta+1$, or is a limit ordinal, i.e., $\alpha = \sup(Y)$, which is the supremum of the ordinals in the set $Y$. The ordinal $\omega$ is the first limit ordinal, and is the limit of the finite ordinals, i.e., the natural numbers.
	
	

	
	
	
	\begin{definition}\label{defn:GroundingSets}
		For an AF $\F$ and all ordinals $\alpha$, let $\G_\alpha$ be inductively defined as follows:
		\begin{itemize}
			\item $\G_0=\emptyset$;
			\item If $\alpha=\beta+1$, then $\G_\alpha=f_{\F}(\G_{\beta})$;
			\item If $\alpha$ is a limit ordinal, then $\G_\alpha=\bigcup_{\beta < \alpha} \G_\beta$.
		\end{itemize}
	\end{definition}
	
	
	It follows from Dung's proof of the existence of the grounded extension that there is an ordinal $\beta$ so that $\G=\G_\alpha$ for every $\alpha\geq \beta$.
	
	\begin{definition}\label{def:grounding ordinal}
		For an AF $\F$, the least ordinal $\alpha$ so that $\G=\G_\alpha$ is the \emph{grounding ordinal} of $\F$.
	\end{definition}
	
	
	
	
	
	An initial bound on the size of the grounding ordinal for (countable) AFs can be established through a straightforward cardinality argument:
	
	\begin{observation}\label{grounding ordinals are countable}
		If $\F$ is a countable AF, then the grounding ordinal of $\F$ is countable.
	\end{observation}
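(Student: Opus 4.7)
The plan is to combine two ingredients: (i) the sequence $\G_0\subseteq\G_1\subseteq\cdots\subseteq\G_\alpha\subseteq\cdots$ is non-decreasing, and (ii) at every stage strictly below the grounding ordinal the inclusion must actually be strict, so we can extract as many distinct elements of $A_\F$ as the grounding ordinal has predecessors. A straightforward cardinality count then forces the grounding ordinal to be countable.

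First I would verify monotonicity of the defense function: if $S\subseteq T$ then $f_\F(S)\subseteq f_\F(T)$, because a larger attacker set can only defend more arguments. From this it follows by transfinite induction (using $\G_0=\emptyset\subseteq\G_1$ as the base case, monotonicity at successors, and the fact that unions preserve inclusions at limits) that $\alpha\leq\beta$ implies $\G_\alpha\subseteq\G_\beta$.

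Next I would argue that if $\beta$ is strictly less than the grounding ordinal, then $\G_\beta\subsetneq\G_{\beta+1}$. Indeed, if $\G_{\beta+1}=\G_\beta$ then $\G_\beta$ is a fixed point of $f_\F$; since $\G$ is the least fixed point we would have $\G\subseteq\G_\beta$, and by monotonicity $\G_\beta\subseteq\G$, giving $\G=\G_\beta$, which contradicts minimality of the grounding ordinal.

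Finally, suppose for contradiction that the grounding ordinal $\alpha$ is uncountable, so $\alpha\geq\omega_1$. For each $\beta<\omega_1$ pick some $x_\beta\in\G_{\beta+1}\setminus\G_\beta$. If $\beta'<\beta$, then $\beta'+1\leq\beta$, so $x_{\beta'}\in\G_{\beta'+1}\subseteq\G_\beta$, while $x_\beta\notin\G_\beta$; hence the $x_\beta$'s are pairwise distinct. This gives an injection from $\omega_1$ into $A_\F$, contradicting countability of $A_\F$. The argument is essentially a pigeonhole computation, and the only real content is the monotonicity observation, so no serious obstacle is expected.
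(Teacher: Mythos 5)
Your proof is correct and follows essentially the same route as the paper: the paper's own argument is exactly this cardinality count, observing that an uncountable grounding ordinal would force uncountably many strict inclusions $\G_\alpha\subsetneq\G_{\alpha+1}$ and hence uncountably many arguments. You simply spell out the supporting details (monotonicity of $f_\F$ and strictness below the grounding ordinal) that the paper leaves implicit.
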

	\begin{proof}
		Otherwise we would have that for each countable ordinal $\alpha$, $\G_{\alpha+1}\supsetneq \G_\alpha$. But there are uncountably many countable ordinals. Thus, $\bigcup_\alpha \G_\alpha\subseteq A_\F$ would be uncountable.
	\end{proof}

	Dung proved that, for \emph{finitary} AFs (i.e., those $\F$ so that the set $\{a\}^-$ is finite, for all $a\in A_\F$),  the grounding ordinal is $\leq \omega$.
	%
	Yet not every countable AF has a grounding ordinal $\leq \omega$. Baumann and Spanring \cite{BaumannSpanring} provide an example, illustrated in Figure \ref{BaumannSpanringFigure}, of an AF with a grounding ordinal of $\omega \cdot 2$. 
	
	\begin{figure}
		\centering
		
		\scalebox{0.8}{
			\begin{tikzpicture}
			
			\node[circle, draw] (a0) {$a_0$};
			\node[circle, draw] (a1) [right=of a0] {$a_1$};
			\node[circle, draw] (a2) [right=of a1] {$a_2$};
			\node[circle, draw] (a3) [right=of a2] {$a_3$};
			\node[circle, draw] (a4) [right=of a3] {$a_4$};
			\node[circle, draw] (a5) [right=of a4] {$a_5$};
			\node[draw=none] (ellipsis1)[right=1mm of a5] {$\cdots$};
			
			\node[circle, draw] (b0) 
			[below=of a0] {$b_0$};
			\node[circle, draw] (b1) [below=of a1] {$b_1$};
			\node[circle, draw] (b2) [below=of a2] {$b_2$};
			\node[circle, draw] (b3) [below=of a3] {$b_3$};
			\node[circle, draw] (b4) [below=of a4] {$b_4$};
			\node[circle, draw] (b5) [below=of a5] {$b_5$};
			\node[draw=none] (ellipsis2)[right=1mm of b5] {$\cdots$};

			\draw[->] (a0) edge (a1);
			\draw[->] (a1) edge (a2);
			\draw[->] (a2) edge (a3);
			\draw[->] (a3) edge (a4);
			\draw[->] (a4) edge (a5);
			
			\draw[->] (b0) edge (b1);
			\draw[->] (b1) edge (b2);
			\draw[->] (b2) edge (b3);
			\draw[->] (b3) edge (b4);
			\draw[->] (b4) edge (b5);

			\draw[->]  (a1) edge[out=-150, in=90] (b0);
			
			\draw[->]     (a3) edge[out=-140, in=60] (b0);
			
			\draw[->]    (a5) edge[out=-140, in=30] (b0);
			\end{tikzpicture}
		}

		\caption{Example from Baumann-Spanring \cite{BaumannSpanring} of an AF with grounding ordinal $\omega\cdot2$.} 
		\label{BaumannSpanringFigure}
	\end{figure}
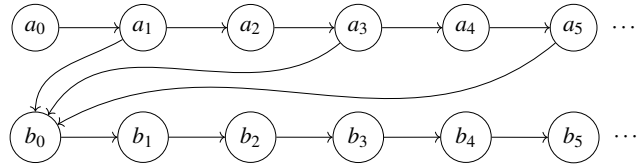

	In this paper, we determine exactly how large the grounding ordinal might be for a countable AF. Recall that $\omega_1$ is the least uncountable ordinal. We will prove (Corollary \ref{bound grounding ordinal for countable AFs}) that 
	\emph{an ordinal $\alpha$ is a grounding ordinal of a countable AF if and only if $\alpha$ is a countable ordinal.}
	
	
	
	In the next section, we introduce the computable argumentation frameworks and state a contrasting result to this, where we give a much better bound if $\F$ is computable.
	

	\subsection{Computability Background}
	
	
	Computable AFs are defined to be the AFs described by some computable function. We denote by  $(\Phi_e)_{e\in\nat}$  a uniform enumeration of all partial computable functions from $\nat\times \nat$ to $\{0,1\}$. The number $e$ simply encodes the finite set of instructions of the Turing machine which computes $\Phi_e$.
	
	\begin{definition}\label{indexing of computable AFs}
		A number $e$ is a \emph{computable index for an AF} $\F=(A_\F,R_\F)$ with $A_\F=\{a_n : n\in \nat\}$ if
		\[
		\Phi_e(n,m)=\begin{cases}
		1 &\text{ if $a_n \att a_m$}\\
		0 &\text{ otherwise}.
		\end{cases}
		\]
		An AF $\F$ is \emph{computable} if it has a computable index $e\in\nat$.
	\end{definition}

	Computable ordinals are defined in a similar way, as those ordinals which are given by computable functions:
	
	\begin{definition}
		A number $e$ is a \emph{computable index for a linear order} $L_e=\{a_n : n\in\nat \}$ if 
		\[
		\Phi_e( n,m)=
		\begin{cases}
		1 &\text{ if $a_n < a_m$}\\
		0 &\text{ otherwise}.
		\end{cases}
		\]
		
		If the linear order $L_e$ is isomorphic to the ordinal $\alpha$, then we say that $\alpha$ is a \emph{computable ordinal}.
	\end{definition}
	
	Observe that there are only countably many computable ordinals, and each of them is countable. However, there are uncountably many countable ordinals. Consequently, there exists a least ordinal that is not computable, but which is countable.
	
	\begin{notation}\label{definition: omega1CK}
		The least non-computable ordinal is commonly denoted by  $\CK$. 
	\end{notation}
	
	Our main results (Theorems \ref{make it big} and \ref{thm:CK upper bound}) on the grounding ordinals of computable AFs show that \emph{the grounding ordinals of computable AFs are exactly the ordinals $\leq \CK$.}
	

	We let $\nat^{<\nat}$ represent the collection of finite sequences of natural numbers and $\nat^\nat$ represent infinite sequences of natural numbers. 
	A string $\sigma\in \nat^{<\nat}$ is a \emph{prefix} of a string $\tau\in \nat^{<\nat}$ or of a sequence $\tau \in \nat^{\nat}$ if there is some $\rho\in \nat^{<\nat}$ or $\rho \in \nat^{\nat}$ so that $\tau = \sigma^\smallfrown\rho$, i.e, $\tau$ is the concatenation of $\sigma$ with $\rho$.
	
	If $\sigma$ is a prefix of $\tau$ and the length of $\tau$ is 1 more than the length of $\sigma$, then $\tau$ is an \emph{immediate extension} of $\sigma$, and we write $\sigma = \tau^-$.
	
	A \emph{tree} is a subset of $\nat^{<\nat}$ which is closed under prefixes. A path $\pi$ through a tree $T$ is an element of $\nat^\nat$ so that every prefix $\sigma$ of $\pi$ is in $T$. Note that since our trees may be infinitely branching, even trees which contain arbitrarily long strings may not have paths. Consider for example, the tree containing the empty string and all strings $i^\smallfrown\sigma$ where the length of $\sigma$ is $\leq i$. 

	For a tree $T$ which does not have a path, we define the \emph{rank} of $T$ to be an ordinal which captures how difficult it is to see that $T$ has no path.
	
	\begin{definition}
		For any $\sigma\in T$, if $\sigma$ is a terminal node in $T$ (i.e., it has no extensions in $T$), then $\sigma$ is assigned rank 0. If all immediate extensions of $\sigma$ are ranked, then $\sigma$ is ranked and has rank 
		\[
		\text{sup}(\{\rank(\tau)+1 : \tau \text{ is an immediate extension of }\sigma\}).
		\]
		This process gives a rank to all elements $\sigma$ so that $\sigma$ is not a prefix of a path in $T$. If the root of the tree is assigned a rank $\alpha$ (i.e., there is no path in $T$), then we say the tree itself has rank $\alpha$.
	\end{definition}
	
	
	The following bounding theorems connect the computable ordinals to the ranks of computable trees:
	
	\begin{theorem}[see \protect{\cite[\S 16.3, Theorem XIX(a)]{rogers1987theory}}]\label{Earlier Bounding Theorem}
		If a computably enumerable tree has no path, then the rank of the tree is an ordinal $<\CK$.
	\end{theorem}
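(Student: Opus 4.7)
The plan is to show that $\rank(T)$ is a computable ordinal whenever $T$ is a path-free c.e.\ tree; by the definition of $\CK$ this gives $\rank(T)<\CK$. The main tool is the Kleene--Brouwer linearization: for $\sigma,\tau\in\nat^{<\nat}$, declare $\sigma<_{KB}\tau$ iff either $\tau$ is a proper prefix of $\sigma$, or $\sigma$ and $\tau$ are prefix-incomparable and, writing $i$ for the least position at which they differ, $\sigma(i)<\tau(i)$. A standard lemma states that $(T,<_{KB})$ is a well-order iff $T$ has no path, and a routine induction on the rank of nodes gives that the rank of $T$ is at most the order type of $(T,<_{KB})$.

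Next I would convert the c.e.\ data into a computable well-order of $\nat$. Fix a computable enumeration $\sigma_0,\sigma_1,\ldots$ of $T$ (with possible repetitions) and define a relation $<^*$ on $\nat$ by declaring $n<^*m$ iff either $\sigma_n\neq\sigma_m$ and $\sigma_n<_{KB}\sigma_m$, or $\sigma_n=\sigma_m$ and $n<m$. The relation $<^*$ is computable, total, and linear. It is also a well-order: any infinite $<^*$-descent $n_0>^*n_1>^*\cdots$ would produce a $\leq_{KB}$-weakly decreasing sequence of strings in $T$, which must eventually stabilize since $(T,<_{KB})$ is a well-order, but from that point on the indices themselves would be strictly $\nat$-decreasing, a contradiction. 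Moreover, sending each $\sigma\in T$ to the least index at which it appears order-embeds $(T,<_{KB})$ into $(\nat,<^*)$, so the order type of $(T,<_{KB})$ is at most that of $(\nat,<^*)$.

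Chaining these facts, the rank of $T$ is bounded by the order type of the computable linear order $(\nat,<^*)$, which is by definition a computable ordinal, hence $<\CK$. The subtle point requiring most care is the tie-breaking clause in the definition of $<^*$: it is exactly what guarantees totality, computability, and well-foundedness in the presence of repeated enumerations, and the descent argument verifying well-foundedness is the one nontrivial check. The classical lemmas about $<_{KB}$ (that it linearizes well-foundedness and dominates rank) would be invoked as background rather than re-proved here.
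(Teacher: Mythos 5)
Your argument is correct: the Kleene--Brouwer linearization, the rank-versus-order-type comparison, and the tie-broken computable copy $(\nat,<^*)$ of $(T,<_{KB})$ together give a computable well-order bounding $\rank(T)$, hence $\rank(T)<\CK$. The paper does not prove this statement but cites it as \cite[\S 16.3, Theorem XIX(a)]{rogers1987theory}, and your proof is essentially the standard one found there, so there is nothing to add.
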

	
	\begin{theorem}[see \protect{\cite[\S 16.3, Theorem XIX(b)]{rogers1987theory}}]\label{thm:high ranks for trees}
		If $\alpha$ is a computable ordinal, then there exists a computable tree with rank $\alpha$.
	\end{theorem}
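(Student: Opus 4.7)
Given a computable ordinal $\alpha$, I would fix a computable well-order $L = (\nat, <_L)$ of order type $\alpha$ (which exists by the definition of computable ordinal). I then define the tree $T \subseteq \nat^{<\nat}$ whose elements are the finite strings $(a_0, a_1, \ldots, a_{k-1})$ that are strictly $<_L$-descending, i.e., $a_0 >_L a_1 >_L \cdots >_L a_{k-1}$. Since $<_L$ is computable, membership in $T$ reduces to checking finitely many inequalities, so $T$ is a computable tree. Because $L$ is a well-order, there is no infinite strictly descending sequence in $L$, so $T$ has no path; hence the rank of $T$ is defined.

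The next step is to compute the rank of the root and show it equals $\alpha$. For each $a \in L$, let $\text{pos}(a)$ denote the order type of the initial segment $\{c \in L : c <_L a\}$. I claim that for every $\sigma^\smallfrown(a) \in T$, one has $\rank(\sigma^\smallfrown(a)) = \text{pos}(a)$. This is a transfinite induction on $\text{pos}(a)$: the immediate extensions of $\sigma^\smallfrown(a)$ in $T$ are exactly the strings $\sigma^\smallfrown(a, b)$ with $b <_L a$, and by the inductive hypothesis these have rank $\text{pos}(b)$. Therefore
\[
\rank(\sigma^\smallfrown(a)) = \sup\{\text{pos}(b)+1 : b <_L a\} = \text{pos}(a),
\]
the last equality being the standard fact that the order type of a well-order equals the supremum of the successors of the order types of its proper initial segments. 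Applying the same computation to the root of $T$, whose immediate extensions are the length-one strings $(b)$ for $b \in L$, yields $\rank(T) = \sup\{\text{pos}(b)+1 : b \in L\} = \alpha$, as desired.

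I do not expect a significant obstacle here: the construction is essentially the classical ``tree of descending sequences'' associated to a well-order, and the verification is a direct double induction (one to establish the rank formula for nodes, one implicit in the fact that the order type of $L$ is recovered as the supremum above). The only minor care point is to handle the successor and limit cases of $\alpha$ uniformly in that supremum, but both cases are immediate from the definition of the order type of an initial segment. Computability of $T$ follows automatically from computability of $<_L$, so no nontrivial construction of a Turing machine beyond a decision procedure for descent is needed.
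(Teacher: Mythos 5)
The paper does not prove this statement---it cites it to Rogers---but your argument is precisely the standard proof of that theorem: the tree of strictly $<_L$-descending finite sequences through a computable well-order of type $\alpha$ is computable and well-founded, and the transfinite induction showing $\rank(\sigma^\smallfrown(a))=\text{pos}(a)$ (hence rank $\alpha$ at the root) is exactly right under the paper's definition of rank. Your proposal is correct and essentially the same construction as in the cited source.
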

	
	
	
	Next, we shift to the background needed for understanding our complexity result of $\Pi^1_1$-completeness of $\G$ for a computable AF.
	
	\begin{definition}
		The $\Sigma^1_1$ sets are formally defined as those subsets of $\nat$ that are definable in the language of second-order arithmetic using a single second-order existential quantifier ranging over subsets of $\nat$ followed by number quantifiers 
		and the first order functions and relations $(+,\cdot,<,0,1,\in)$; for more details, see \cite[\S 16]{rogers1987theory}. $\Pi^1_1$ sets are the complements of $\Sigma^1_1$ sets.
	\end{definition}
	
	Note that the $\Sigma^1_1$ sets are the analog in the infinite setting of the NP problems in the finite setting. Both are described by an exhaustive search over all subsets of the domain. Similarly, the $\Pi^1_1$ sets are the analog of the co-NP problems in the finite setting.
	
	\begin{observation}\label{obs:grounded extensions are Pi11}
		The set of pairs $(a,e)$, where the argument $a$ belongs to the grounded extension of the AF with computable index $e$, is a $\Pi^1_1$ set. This follows from the fact that the grounded extension is the \emph{smallest}  fixed-point of the defense function $f_\F$. Specifically, $\G$ is defined by the formula $\forall X (f_{\F}(X)=X \rightarrow a\in X)$. Note that the defense function is defined in terms of the attack relation, which is computable, and quantifiers over arguments, which are encoded by natural numbers. Therefore, the only second-order quantifier involved is the initial universal quantifier $\forall X$.
	\end{observation}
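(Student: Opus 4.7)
The plan is to turn the characterization already sketched in the statement into a genuine $\Pi^1_1$ formula, and then to make the definition uniform in the computable index $e$. First I would record the key structural fact: the defense function $f_\F$ is monotone, because if $S\subseteq T$ then any attacker of $y$ that is itself attacked by some element of $S$ is a fortiori attacked by some element of $T$. Monotonicity implies that the least fixed point of $f_\F$ coincides with the intersection of all fixed points of $f_\F$ (the least fixed point is contained in every fixed point by minimality, and conversely it belongs to the intersection because it is itself a fixed point). Hence $a\in\G$ iff $\forall X\,(f_\F(X)=X \to a\in X)$.

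Next I would verify that the matrix of this formula is arithmetic. Expanding $f_\F(X)=X$ yields
\[
\forall y\,\Bigl(\bigl(\forall z\,(z\att y \to \exists w\,(w\att z \wedge w\in X))\bigr) \leftrightarrow y\in X\Bigr),
\]
in which every quantifier ranges over natural numbers (arguments are elements of $\nat$ via the indexing of $A_\F$) and every occurrence of $\att$ is decidable because the attack relation is computable. The implication $\cdots \to a\in X$ preserves this arithmetic form. The total formula thus consists of a single universal second-order quantifier $\forall X$ in front of an arithmetic body, which is exactly the syntactic shape of a $\Pi^1_1$ predicate.

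Finally, to obtain uniformity in $e$, I would substitute $\Phi_e(\cdot,\cdot)=1$ in place of the decidable predicate $\cdot\att\cdot$; the resulting matrix is arithmetic uniformly in $e$ (the graph of $\Phi_e$ is $\Sigma^0_1$ in $e$). A small subtlety is that one wants to restrict to those $e$ that are legitimate computable indices for an AF, i.e.\ those for which $\Phi_e$ is total and $\{0,1\}$-valued; this side condition is $\Pi^0_2$ and hence $\Pi^1_1$, so conjoining it preserves the overall class. I expect no real obstacle here: the argument is essentially a quantifier-counting check once the intersection-of-fixed-points description of $\G$ is in place, and the only point requiring any care is making sure that unpacking $f_\F(X)=X$ introduces no second-order quantifiers beyond the initial $\forall X$.
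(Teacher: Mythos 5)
Your proposal is correct and follows essentially the same route as the paper's own justification, which is just the sketch embedded in the observation itself: express membership in $\G$ as $\forall X\,(f_\F(X)=X \rightarrow a\in X)$ and check that unpacking $f_\F(X)=X$ introduces only number quantifiers over arguments together with the computable attack relation, leaving $\forall X$ as the sole second-order quantifier. The additional details you supply --- that the least fixed point coincides with the intersection of all fixed points, the explicit arithmetic matrix, and the $\Pi^0_2$ side condition singling out legitimate indices $e$ --- are exactly the points the paper leaves implicit, and they are all handled correctly.
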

	
	
	The following theorem states that the set of computable indices for trees which have paths is a maximally complex $\Sigma^1_1$ set.
	
	\begin{theorem}[Kleene \cite{kleene1955arithmetical}] \label{characterization analytic sets} \label{Kleene}
		A set $X\subseteq \nat$ is $\Sigma^1_1$ if and only if there is a computable sequence of computable trees $(\T^X_n)_{n\in\nat}$ so that $n\in X$ iff $\T^X_n$ has a path.
	\end{theorem}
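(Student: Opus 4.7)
The plan is to establish the two implications separately, each via an essentially syntactic manipulation of the defining formula.

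For the easier ($\Leftarrow$) direction, suppose we are given a computable sequence of computable trees $(\T^X_n)_{n\in\nat}$ with $n\in X \iff \T^X_n$ has a path. The relation ``$\sigma\in \T^X_n$'' is computable uniformly in $n$, since we assumed a computable sequence. Then
\[
n\in X \iff \exists f\in\nat^\nat\,\forall k\in\nat\,\bigl(f\restriction k \in \T^X_n\bigr),
\]
which is manifestly $\Sigma^1_1$: a single existential function quantifier followed by a universal number quantifier over a computable matrix.

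For the ($\Rightarrow$) direction, the main task is to reduce an arbitrary $\Sigma^1_1$ definition to the form $\exists f\,\forall k\, R(n,f\restriction k)$ with $R$ computable. I would proceed in two reduction steps. First, by standard prenex manipulations and the fact that finitely many number quantifiers and function quantifiers can be contracted into one of each (using pairing of integers and pairing of functions), I rewrite the formula defining $X$ as $\exists Y\,\psi(n,Y)$ where $\psi$ is arithmetical with number quantifiers only, and further push all number quantifiers inside the function quantifier so that the matrix uses $Y$ only through finitely many values. Second, I Skolemize the remaining inner existential number quantifiers into an auxiliary function and pair it with $Y$, obtaining the normal form $\exists f\,\forall k\,R(n,f\restriction k)$ where $R$ is a computable predicate on (number, finite string) pairs.

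Given the normal form, I define
\[
\T^X_n \;=\; \{\sigma\in \nat^{<\nat} : \forall i\leq |\sigma|\, R(n,\sigma\restriction i)\}.
\]
This is a tree because the condition is prefix-closed, and membership in $\T^X_n$ is computable uniformly in $n$ from a computable index for $R$. A path through $\T^X_n$ is exactly a function $f$ witnessing $\forall k\, R(n,f\restriction k)$, so $\T^X_n$ has a path iff $n\in X$, as desired.

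The main obstacle is the normal form reduction in the ($\Rightarrow$) direction: verifying carefully that the Skolemization and contraction steps preserve the complexity bound and yield a truly computable matrix $R$ that depends only on a finite initial segment of the witness function. This is the standard Kleene normal form computation, and once it is done the tree construction is immediate.
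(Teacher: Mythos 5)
This theorem is stated in the paper as a citation to Kleene and is given no proof there, so there is nothing internal to compare against; your argument is the standard proof of the Kleene normal form theorem and is correct. The backward direction is immediate as you say, and in the forward direction the two reductions you identify (contracting quantifiers, Skolemizing the inner existential number quantifiers into the witness function, and using the fact that a computable matrix queries its function argument only on a computably bounded finite initial segment, so the universal number quantifiers collapse to a single $\forall k$ over prefixes) are exactly the needed content; the prefix-closed tree $\T^X_n$ you define then has a path iff $n\in X$.
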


	\begin{definition}\label{def:Pi11 completeness}
		A set $X$ is \emph{$\Pi^1_1$-complete} if it is $\Pi^1_1$ and whenever $Y$ is a $\Pi^1_1$ set, there exists a computable function $f:\nat\rightarrow\nat$ so that $n\in Y$ if and only if $f(n)\in X$.
	\end{definition}

	Finally, all of these notions relativize. For example, for any Turing degree $d$, we say that an AF is $d$-computable, if there is a $d$-computable function $\Phi$ giving the attack relation. We say a set is $\Pi^1_1(d)$ if it can be defined in a $\Pi^1_1$ way using a predicate for an oracle in $d$. All the theorems in this section are valid in their relativized form.

	

	\section{AFs with maximally large grounding ordinals and $\Pi^1_1$-completeness}\label{sec:maximially long and hard}
	
	In this section, we begin with an example to see how we can build computable AFs with large grounding ordinals. Then we use this example to give computable AFs with any possible grounding ordinal, namely each $\alpha\leq \CK$. 

	\begin{example}\label{exampleAnyComputableOrdinal}
		Let $\T$ be a tree. Let $\F_\T$ 
		be defined by having elements $\{a_\sigma : \sigma \in \T\}\cup \{b_\sigma : \sigma\in \T\}$. We let $b_\sigma \att a_{\sigma^-}$ where $\sigma$ is an immediate extension of $\sigma^-$ and $a_\sigma\att b_\sigma$. Note that $\F_\T$ is a computable AF if $\T$ is a computable tree.
	\end{example}
	
	\begin{restatable}{theorem}{CharacterizeGsInExample}
		\label{thm:characterizeGsInExample}
		In $\F_\T$, $\G_{\beta+1}$ is exactly the set of $a_\sigma$ so that $\sigma$ has rank $\leq \beta$.
	\end{restatable}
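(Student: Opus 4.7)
My plan is to prove the stronger statement, by transfinite induction on $\alpha$, that $\G_\alpha = \{a_\sigma : \sigma \text{ is ranked and } \rank(\sigma) < \alpha\}$; the theorem then follows by taking $\alpha = \beta+1$ and noting that $\rank(\sigma) < \beta+1$ is equivalent to $\rank(\sigma) \leq \beta$. Observe first the key structural feature of $\F_\T$: the only attackers of $a_\sigma$ are the $b_\tau$ with $\tau$ an immediate extension of $\sigma$ in $\T$, and the only attacker of $b_\sigma$ is $a_\sigma$ itself.

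The base case $\alpha = 0$ is trivial since both sides are empty, and the limit case is immediate from the definition $\G_\alpha = \bigcup_{\gamma < \alpha}\G_\gamma$ together with the inductive description of each $\G_\gamma$. The heart of the argument is the successor step $\alpha = \beta+1$, where I would compute $f_{\F_\T}(\G_\beta)$ using the inductive hypothesis that $\G_\beta$ consists solely of arguments $a_\sigma$ with $\rank(\sigma) < \beta$ (in particular, $\G_\beta$ contains no $b$-argument). Since every attacker of any $b$-argument is an $a$-argument and the only attackers of $a$-arguments are $b$-arguments, no $b$-argument can be defended by $\G_\beta$; hence $\G_{\beta+1}$ contains only $a$-arguments. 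For an $a$-argument, $a_\sigma$ is defended by $\G_\beta$ iff every $b_\tau$ with $\tau$ an immediate extension of $\sigma$ is attacked by $\G_\beta$, iff every such $a_\tau$ lies in $\G_\beta$, iff (by IH) every immediate extension $\tau$ is ranked with $\rank(\tau) < \beta$. By the definition of rank, this last condition is exactly that $\sigma$ is ranked with $\rank(\sigma) \leq \beta$, i.e., $\rank(\sigma) < \alpha$, completing the induction.

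I do not anticipate a serious obstacle: the only subtle point is bookkeeping in the successor step, namely ensuring that the vacuous case (where $\sigma$ is terminal, so $a_\sigma$ has no attackers and is trivially defended) corresponds correctly to $\rank(\sigma) = 0$, and that the requirement ``$\sigma$ is ranked'' in the inductive claim is maintained, since $\sigma$ fails to be ranked precisely when it has an immediate extension that is not ranked, in which case the corresponding $a_\tau$ never enters any $\G_\gamma$ and so $a_\sigma$ never becomes defended. Once this is handled correctly, the induction closes cleanly and yields the theorem.
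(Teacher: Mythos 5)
Your proof is correct and takes essentially the same approach as the paper: a transfinite induction exploiting that the only attackers of $a_\sigma$ are the $b_\tau$ for immediate extensions $\tau$, each of which is attacked only by $a_\tau$, so defense of $a_\sigma$ reduces to membership of the $a_\tau$'s. Your reindexing to $\G_\alpha = \{a_\sigma : \rank(\sigma) < \alpha\}$ is a minor repackaging that trivializes the limit case (which the paper handles with a short separate argument), but the substance is identical.
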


	We deduce the following theorem by applying Theorem \ref{thm:characterizeGsInExample} to all computable ordinals:
	
	\begin{restatable}{theorem}{HittingAnySuccessor}
		\label{HittingAnySuccessor}
		If $\alpha=\beta+1$ is a computable successor ordinal, then there exists a computable AF $\F_\alpha$ with grounding ordinal $\alpha$. 
		
	\end{restatable}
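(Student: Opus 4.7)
The plan is to invoke Example \ref{exampleAnyComputableOrdinal} applied to a computable tree whose rank is exactly $\beta$. Since $\alpha=\beta+1$ is a computable ordinal, we have $\beta<\alpha<\CK$, so $\beta$ is itself a computable ordinal; by Theorem \ref{thm:high ranks for trees}, I would fix a computable tree $\T$ of rank exactly $\beta$ and set $\F_\alpha := \F_\T$. This is a computable AF by the construction in Example \ref{exampleAnyComputableOrdinal}.

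Next I would verify that $\G=\G_{\beta+1}$. By Theorem \ref{thm:characterizeGsInExample}, $\G_{\beta+1}=\{a_\sigma : \rank(\sigma)\leq\beta\}$, and since every node of $\T$ has rank at most $\rank(\T)=\beta$, this equals $\{a_\sigma : \sigma\in\T\}$. Applying Theorem \ref{thm:characterizeGsInExample} one step later gives $\G_{\beta+2}=\{a_\sigma : \rank(\sigma)\leq\beta+1\}=\G_{\beta+1}$, because $\T$ has no nodes of rank exceeding $\beta$. Hence the iteration has stabilized, and $\G=\G_{\beta+1}=\G_\alpha$.

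To pin the grounding ordinal down to exactly $\alpha$, it remains to show $\G_\beta \subsetneq \G$. A short transfinite induction on $\gamma$, using Theorem \ref{thm:characterizeGsInExample} at successor stages and the union rule at limits, gives the uniform identity $\G_\gamma=\{a_\sigma:\rank(\sigma)<\gamma\}$ for every ordinal $\gamma$. In particular, since the root $\langle\rangle$ of $\T$ has rank exactly $\beta$, we obtain $a_{\langle\rangle}\in\G\setminus\G_\beta$, so the grounding ordinal of $\F_\alpha$ is exactly $\beta+1=\alpha$. There is no substantive obstacle here; Theorems \ref{thm:high ranks for trees} and \ref{thm:characterizeGsInExample} supply all the content, and what remains is only careful bookkeeping across the transfinite stages.
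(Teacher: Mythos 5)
Your proposal is correct and follows essentially the same route as the paper: apply Theorem \ref{thm:high ranks for trees} to get a computable tree of rank $\beta$, form $\F_\T$ as in Example \ref{exampleAnyComputableOrdinal}, and use Theorem \ref{thm:characterizeGsInExample} to read off the grounding ordinal. You simply spell out the minimality step (that the root's rank being exactly $\beta$ forces $\G_\beta\subsetneq\G_{\beta+1}$) that the paper's proof compresses into one sentence.
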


	By taking an appropriate union of the AFs from Example \ref{exampleAnyComputableOrdinal}, we can also build AFs whose grounding ordinal is a computable limit ordinal.
	
	\begin{restatable}{theorem}{HittingLimitOrdinals}\label{HittingLimitOrdinals}
		If $\alpha$ is any computable ordinal, then there exists a computable AF $\F_\alpha$ with grounding ordinal $\alpha$. 
	\end{restatable}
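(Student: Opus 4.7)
The plan is to reduce the limit case to the successor case via a disjoint-union construction. The trivial case $\alpha = 0$ is witnessed by the empty AF, and Theorem \ref{HittingAnySuccessor} already supplies computable witnesses for every computable successor grounding ordinal. For a computable limit $\alpha$, I would build $\F_\alpha$ as a disjoint union of computable AFs $\F_{\alpha_n}$, each furnished by Theorem \ref{HittingAnySuccessor}, whose (successor) grounding ordinals $\alpha_n$ are cofinal in $\alpha$.

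To obtain such a sequence, fix a computable linear order $L = \{a_0, a_1, \ldots\}$ isomorphic to $\alpha$, and define a strictly increasing computable sequence $(x_n)$ in $L$ by $x_0 = a_0$ and $x_{n+1}$ equal to the first $a_i$ in the enumeration with $a_i >_L x_n$ and $a_i \geq_L a_{n+1}$. Such $a_i$ exists because $L$ has no maximum (as $\alpha$ is a limit), and the condition $x_k \geq_L a_k$ forces $(x_n)$ to be cofinal in $L$. For each $n$, the initial segment $L_n := \{z \in L : z \leq_L x_n\}$ is a computable well-order with top element $x_n$, so it represents a successor computable ordinal $\alpha_n$, and the $\alpha_n$ are cofinal in $\alpha$. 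Indices for the $L_n$ are uniformly computable in $n$, so feeding them into the (uniform) construction of Theorem \ref{HittingAnySuccessor} produces a uniformly computable sequence of AFs $\F_{\alpha_n}$, each with grounding ordinal exactly $\alpha_n$. Their disjoint union $\F_\alpha$---encoded with arguments $\{\langle n, a\rangle : n \in \nat,\; a \in A_{\F_{\alpha_n}}\}$ under a computable pairing, and with attacks kept within components---is then a computable AF.

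To verify that $\F_\alpha$ has grounding ordinal exactly $\alpha$, observe that in a disjoint union the defense function splits along components: the attackers of $\langle n, a\rangle$ all lie in the $n$-th component, so $f_{\F_\alpha}(S)$ restricted to the $n$-th component depends only on $S$ restricted to that component, and equals $\{n\} \times f_{\F_{\alpha_n}}(S_n)$ where $S_n = \{a : \langle n,a\rangle \in S\}$. A straightforward transfinite induction then yields $\G^{\F_\alpha}_\beta = \bigcup_n \{n\} \times \G^{\F_{\alpha_n}}_\beta$ for every ordinal $\beta$. For each $\beta < \alpha$, cofinality of $(\alpha_n)$ supplies some $n$ with $\alpha_n > \beta$, so the $n$-th component has not yet stabilized and $\G^{\F_\alpha}_\beta \subsetneq \G^{\F_\alpha}$; at stage $\alpha$ every component has long since stabilized, so $\G^{\F_\alpha}_\alpha = \G^{\F_\alpha}$.

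The main point demanding care is the uniformity of Theorem \ref{HittingAnySuccessor}: I need its construction---via Example \ref{exampleAnyComputableOrdinal} applied to a computable tree of the appropriate rank supplied by Theorem \ref{thm:high ranks for trees}---to be computable in the given index of the target successor ordinal, so that the sequence $(\F_{\alpha_n})$ really is uniformly computable and the disjoint union is computable. Inspection of the natural proof of Theorem \ref{thm:high ranks for trees} (and hence of Theorem \ref{HittingAnySuccessor}) yields this uniformity, and this is the only nontrivial subtlety in the argument.
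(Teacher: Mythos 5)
Your proposal is correct and follows essentially the same route as the paper: reduce to the successor case, extract from a computable presentation of $\alpha$ a uniformly computable sequence of (indices for) ordinals cofinal in $\alpha$, apply the uniform tree-rank construction of Theorem \ref{thm:high ranks for trees} together with Example \ref{exampleAnyComputableOrdinal} to each, and take a computable disjoint union, whose grounding ordinal is the supremum of the componentwise grounding ordinals. The only cosmetic difference is that you arrange the cofinal ordinals to be successors and invoke Theorem \ref{HittingAnySuccessor} directly, whereas the paper uses the order types $\beta_n$ of initial segments and notes that the resulting components have grounding ordinal $\beta_n+1$; the uniformity concern you flag is exactly the point the paper's proof also relies on.
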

	
	
	We now proceed to construct a computable AF whose grounding ordinal is $\CK$. 
	

	\begin{theorem}\label{Can be Complete}\label{make it big}
		There are computable AFs where the grounding ordinal is $\CK$.
		There are computable AFs where $\G$ is $\Pi^1_1$-complete. 
	\end{theorem}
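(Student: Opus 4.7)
The plan is to give a single construction, based on Example \ref{exampleAnyComputableOrdinal}, that witnesses both statements simultaneously. Combining Theorem \ref{thm:characterizeGsInExample} with the fact that no $b_\sigma$ ever enters $\G_\alpha(\F_\T)$ (since that would require some $b$-argument to already be in a prior $\G_\beta$, contrary to the iteration starting from $\emptyset$), one reads off that $\G(\F_\T)=\{a_\sigma:\sigma\text{ is ranked in }\T\}$ and that the grounding ordinal of $\F_\T$ equals $\sup\{\rank(\sigma)+1:\sigma\text{ ranked in }\T\}$.

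Next I would fix a computable sequence $(\T_n)_{n\in\nat}$ of computable trees enjoying two features: (i) the set $\{n:\T_n\text{ has no path}\}$ is $\Pi^1_1$-complete, and (ii) the ranks of the ranked $\T_n$ are cofinal in $\CK$. A single sequence discharging both can be obtained by letting $\T_n$ be the Kleene--Brouwer tree of the $n$-th computable linear order $L_n$: then $\T_n$ has no path iff $L_n$ is well-ordered, which is the standard $\Pi^1_1$-complete predicate, and in that case $\rank(\T_n)$ equals the order type of $L_n$, so every computable ordinal appears as the rank of some $\T_n$. (Alternatively, one may take the disjoint union of two separate sequences, one supplied by Theorem \ref{Kleene} and one by Theorem \ref{thm:high ranks for trees}.)

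Then I would let $\F$ be the disjoint union $\bigsqcup_n \F_{\T_n}$, relabeling the arguments as $a^{(n)}_\sigma, b^{(n)}_\sigma$. Since attacks do not cross components, each $\G_\alpha$ splits as $\G_\alpha(\F)=\bigsqcup_n \G_\alpha(\F_{\T_n})$, and $\F$ is still a computable AF. By Theorem \ref{thm:characterizeGsInExample} applied componentwise, $a^{(n)}_{\langle\rangle}\in\G(\F)$ iff the root of $\T_n$ is ranked iff $\T_n$ has no path. So the map $n\mapsto a^{(n)}_{\langle\rangle}$ is a computable reduction of $\{n:\T_n\text{ has no path}\}$ to $\G(\F)$, yielding $\Pi^1_1$-hardness; together with Observation \ref{obs:grounded extensions are Pi11}, $\G(\F)$ is $\Pi^1_1$-complete.

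For the grounding ordinal, Theorem \ref{Earlier Bounding Theorem} forces every ranked $\sigma\in\T_n$ to have rank $<\CK$, hence $\G_\CK(\F)=\G(\F)$. Cofinality of the ranks supplies, for every $\alpha<\CK$, some ranked $\sigma\in\T_n$ of rank $\geq\alpha$ whose argument $a^{(n)}_\sigma$ lies in $\G(\F)\setminus\G_\alpha(\F)$, so $\G_\alpha(\F)\subsetneq\G(\F)$. Thus the grounding ordinal of $\F$ is exactly $\CK$. The main obstacle is arranging (i) and (ii) with a single computable sequence; this is handled cleanly by the Kleene--Brouwer construction, which ties path-existence to ill-foundedness and tree rank to order type in one stroke.
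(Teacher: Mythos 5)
Your proposal is correct and follows essentially the same route as the paper: form the disjoint union of the frameworks $\F_\T$ from Example \ref{exampleAnyComputableOrdinal} over a family of trees rich enough to encode a $\Pi^1_1$-complete set and to realize ranks cofinal in $\CK$, then read off both claims from Theorem \ref{thm:characterizeGsInExample}, Theorem \ref{Kleene}, and the bounding theorems. The only difference is that the paper unions over \emph{all} computably enumerable trees (which requires a small relabeling lemma to make the domain computable), whereas you use a single computable sequence such as Kleene--Brouwer trees of computable linear orders; both work.
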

	\begin{proof}
		We construct the AF $\F$ in the following Lemma:
		\begin{restatable}{lem}{allTreesTogether}\label{allTreesTogether}
			There is a computable AF $\F$ isomorphic to the disjoint union of $\{
			\F_\T : \T$  a computably enumerable tree$\}$.
		\end{restatable}
		Note that an argument $a\in A_{\F_{\T}}$ is in $\G_\alpha$ in the disjoint union $\F$ if and only if it is in $\G_\alpha$ in $\F_{\T}$. Thus for the root $\lambda_\T$ of the tree $\T$, the argument $a_{\lambda_\T}$ is in $\G$ if and only if $\T$ has no path. It follows from Theorem \ref{Kleene} that $\G$ is a $\Pi^1_1$-complete set. 
		
		Since every computable tree is among the $\T$ included in $\F$, and there are trees with arbitrarily high rank among the computable ordinals by Theorem \ref{thm:high ranks for trees}, we cannot have $\G=\G_\alpha$ for any $\alpha<\CK$. Thus the grounding ordinal is $\geq \CK$.
		On the other hand, every element of $\G$ is an $a_\sigma\in \F_{\T}$ where $\sigma$ is ranked in $\T$ by Theorem \ref{thm:characterizeGsInExample}. Then the set of strings $S=\{\tau : \sigma^\smallfrown \tau\in \T\}$ forms a ranked computably enumerable tree with the same rank as the rank of $\sigma$ in $\T$. Thus the rank of $\sigma$ in $\T$ is some $\beta <\CK$ by Theorem \ref{Earlier Bounding Theorem}, so $a_\sigma\in \G_{\beta+1}\subseteq \G_{\CK}$, which shows that $\G=\G_\CK$.
	\end{proof}
	
	
	Having established the existence of computable AFs with grounding ordinal $\alpha$ for any $\alpha \leq \CK$, we now turn our attention to proving that $\CK$ is the largest possible grounding ordinal for a computable AF. 
	
	\section{Upper bounds on grounding ordinals}\label{sec:upper bounds}
	From Observation \ref{obs:grounded extensions are Pi11}, it follows that the grounded extension of any computable AF $\F$ is a $\Pi^1_1$-set. Hence, Theorem \ref{Kleene} ensures that there is a sequence of trees $\T^a$, for $a\in A_\F$, such that $a\in \G$ if and only if $\T^a$ has no path. In order to yield a restriction on the grounding ordinal of an AF, we need to construct these trees in a way that the rank of $\T^a$ corresponds to the least $\alpha$ such that $a\in \G_\alpha$. To do so, we highlight a natural class of extensions in AFs which we call ``self-defending'' and use these to give a dual understanding of the grounded extension in terms of the largest self-defending extension instead of the least fixed-point of $f_\F$. 
	This construction is detailed in Section \ref{sec:tree rank analysis of groundedness}.
	Once established, we will apply Theorem \ref{Earlier Bounding Theorem} to conclude that each $a\in \G$ belongs to some $\G_\alpha$ with $\alpha<\CK$, thereby showing that $\G_{\CK}=\G$.  

	\subsection{Rank analysis for groundedness}\label{sec:tree rank analysis of groundedness}
	
	In order to show that an element $a$ is in $\G$, we endeavor to show that every $b$ which attacks $a$ is in $\G^+$. That is, we show that $a\in f_{\F}(\G)$, and since $\G=f_{\F}(\G)$, this shows that $a\in \G$. Thus, we first build a tree $T_{\{b\}}$ so that $b\in \G^+$ if and only if $T_{\{b\}}$ has no path. We understand $\G^+$ in terms of the following property of $\G^+$.
	
	\begin{definition}
		A set $X\subseteq A_{\F}$ is \badadmissible if whenever $y\in A_\F$ attacks some element of $X$, then there is some $x\in X$ so that $x\att y$. (Note that we are \emph{not} assuming that $X$ is conflict-free.)
	\end{definition}
	
	To the best of our knowledge, this result has been previously established, though we do not know an appropriate citation.
	
	\begin{restatable}{lemma}{LargestSelfDefending}
		\label{lem:largest badadmissible extension}
		The largest \badadmissible extension is the complement of $\G^+$.
	\end{restatable}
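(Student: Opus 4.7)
The plan is to prove the two inclusions separately, setting $Y := A_\F\setminus \G^+$ as the conjectured largest self-defending extension.

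First I would show that $Y$ itself is self-defending. Suppose $y\att x$ with $x\in Y$, and assume for contradiction that no element of $Y$ attacks $y$. Then every attacker of $y$ lies in $\G^+$, i.e., is attacked by some element of $\G$. This says precisely that $\G$ defends $y$, so $y\in f_\F(\G)=\G$. But then $y\in \G$ and $y\att x$ forces $x\in \G^+$, contradicting $x\in Y$. The only ingredient used here is that $\G$ is a fixed point of $f_\F$, so this part is short.

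The harder direction is showing that every self-defending extension $X$ is contained in $Y$, equivalently $\G\cap X^-=\emptyset$. I would prove by transfinite induction on $\alpha$ that $\G_\alpha\cap X^-=\emptyset$. The base case $\G_0=\emptyset$ and the limit case (unions preserve the property) are immediate. For the successor case, suppose $g\in \G_{\alpha+1}=f_\F(\G_\alpha)$ and $g\att x$ for some $x\in X$. Since $X$ is self-defending, there is some $x'\in X$ with $x'\att g$. But $g\in f_\F(\G_\alpha)$ means $\G_\alpha$ defends $g$, so there must exist $g'\in \G_\alpha$ with $g'\att x'$. Then $g'\in \G_\alpha\cap X^-$, contradicting the inductive hypothesis. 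Taking $\alpha$ large enough that $\G_\alpha=\G$ yields the conclusion.

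The main obstacle is setting up the successor step of the induction correctly: one must carefully track the direction of attacks, using self-defense on the outermost attack ($y\att x\in X$) to pull the contradiction one step back along the iterative construction of $\G$. Conceptually, the proof is a dual version of Dung's fundamental lemma for admissible extensions, adapted to the weaker notion of self-defense which omits the conflict-freeness requirement. Once the two inclusions are established, the ``largest'' qualifier in the statement is justified by the second part combined with the fact (from the first part) that $Y$ is itself self-defending.
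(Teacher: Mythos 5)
Your proposal is correct and follows essentially the same route as the paper: one direction uses that $\G$ is a fixed point of $f_\F$ to show $A_\F\setminus\G^+$ is self-defending, and the other propagates the self-defense property back through the stages $\G_\alpha$ (your explicit transfinite induction is just a repackaging of the paper's least-counterexample argument on $\alpha$). No gaps.
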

	
	
	This suggests a strategy for showing,  in a $\Sigma^1_1$ way, that an element is in $A_\F\smallsetminus \G^+$: Show that it is contained in a \badadmissible extension. 
	For any finite set $S$, we produce a tree so that the paths through $S$ encode \badadmissible extensions containing $S$.
	
	Recall that we fix an enumeration of the arguments of $\F$ as $A_{\F}=\{a_n : n\in \nat\}$. 
	
	\begin{definition}
		Let $S$ be any finite subset of $A_\F$.
		
		For $\sigma\in \nat^{<\nat}$, define $\mran_S(\sigma)=S\cup \{a_{\sigma(m)-1} : \sigma(m)\geq 1\}$.
		
		For $\pi\in \nat^\nat$, define $\mran(\pi) = \bigcup_{\sigma\preceq\pi} \mran(\sigma)$.
	\end{definition}
	
	We use a computable function $x,y\mapsto \langle x,y \rangle$ which is a bijection from $\nat\times \nat$ to $\nat$.
	
	\begin{definition}\label{defTreeForSelfDefending}
		For $S$ any finite subset of $A_\F$, we define a tree $T_S$ as follows:
		\begin{itemize}
			\item The empty string is in $T_S$.
			\item Let $\sigma\in T_S$ with $|\sigma| = \langle n ,m \rangle$.
			\begin{itemize}
				\item  Case 1: $a_n$ attacks some element of $\mran_S(\sigma)$. 
				Then $\sigma^\smallfrown i\in T_S$ if and only if $i>0$ and $a_{i-1}\att a_n$;
				\item Case 2: $a_n$ does not attack any element of $\mran_S(\sigma)$. Then $\sigma^\smallfrown i\in T_S$ if and only if $i=0$.
			\end{itemize}
		\end{itemize}
	\end{definition}
	
	Informally, each path $\pi$ describes the set $\mran(\pi)$. At each level of the tree, we consider an element $a_n$ and for any $\sigma$ at that level of the tree, the immediate extension of $\sigma$ must justify how the set $\mran(\pi)$ is self-defending in the face of a possible attack by $a_n$. The extension $\sigma^\smallfrown 0$ represents the justification that there is no attack from $a_n$ to any element of $\mran(\sigma)$, and the extension $\sigma^\smallfrown (i+1)$ represents the justification that $a_i\in \mran(\pi)$ defends against an attack from $a_n$.
	
	\begin{restatable}{lemma}{SelfDefendingIFFPath}
		\label{lem:badadmissible extension iff path in TS}
		$S$ is contained in a \badadmissible extension if and only if $T_S$ has a path.
	\end{restatable}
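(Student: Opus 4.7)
The plan is to prove both directions by a level-by-level correspondence between paths through $T_S$ and \badadmissible extensions containing $S$. The key observation is that indexing levels by $|\sigma|=\langle n,m\rangle$ forces every argument $a_n$ to be consulted at infinitely many levels of $T_S$, which is exactly what gives the backward direction enough room to find defenders after $\mran_S(\sigma)$ has grown sufficiently.

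For the forward direction, I would start with a \badadmissible extension $X\supseteq S$ and build a path $\pi$ through $T_S$ by recursion on length, maintaining the invariant that $\mran_S(\sigma)\subseteq X$ for every prefix $\sigma\preceq \pi$. At a stage where $\sigma\preceq \pi$ has length $\langle n,m\rangle$: if $a_n$ attacks some element of $\mran_S(\sigma)\subseteq X$, then since $X$ is \badadmissible there is some $a_i\in X$ with $a_i\att a_n$, so setting $\pi(\langle n,m\rangle)=i+1$ keeps the extension inside $T_S$ (Case 1 of Definition \ref{defTreeForSelfDefending}) and preserves the invariant; otherwise Case 2 applies and I take $\pi(\langle n,m\rangle)=0$, adding no new element. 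Iterating yields a total path.

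For the backward direction, given a path $\pi$ through $T_S$, I would show that $X:=\mran(\pi)$ is a \badadmissible extension containing $S$. Containment is immediate from the definition of $\mran_S$. To verify that $X$ is \badadmissible, let $a_n$ attack some $x\in X$. Then $x\in \mran_S(\tau)$ for some prefix $\tau\preceq \pi$, so I pick $m$ with $\langle n,m\rangle\geq |\tau|$ and let $\sigma\preceq \pi$ be the prefix of length $\langle n,m\rangle$; since $\mran_S(\tau)\subseteq \mran_S(\sigma)$, the argument $a_n$ still attacks an element of $\mran_S(\sigma)$, placing us in Case 1 at $\sigma$. The defining condition of $T_S$ in Case 1 then forces $\pi(\langle n,m\rangle)\geq 1$ and $a_{\pi(\langle n,m\rangle)-1}\att a_n$, exhibiting a defender inside $X$ against $a_n$.

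The only real subtlety is this bookkeeping in the backward direction: one has to recognize that pairing the level index as $\langle n,m\rangle$ is doing precisely the work of giving each argument $a_n$ arbitrarily many opportunities to appear as a potential attacker at a stage where enough of $X$ has already been committed to $\mran_S(\sigma)$. Once that is noticed, both directions become mechanical verifications from the case distinction in Definition \ref{defTreeForSelfDefending}.
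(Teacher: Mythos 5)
Your proposal is correct and follows essentially the same argument as the paper: one direction extracts the self-defending extension $\mran(\pi)$ from a path and uses the pairing of levels as $\langle n,m\rangle$ to find, for each attacker $a_n$, a sufficiently late level where a defender is recorded; the other direction builds a path greedily from a self-defending extension $X\supseteq S$, using the invariant $\mran_S(\sigma)\subseteq X$. The only difference is presentational (you spell out the bookkeeping with the pairing function slightly more explicitly than the paper does).
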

	
	
	\begin{lemma}\label{disjoint from G+ iff have path}
		$S\cap \G^+=\emptyset$ if and only if $T_S$ has a path.
	\end{lemma}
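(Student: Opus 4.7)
The plan is to obtain this lemma by directly combining the two immediately preceding lemmas, noting that the condition ``contained in a self-defending extension'' is equivalent to ``contained in the largest self-defending extension.''

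First I would observe that by Lemma \ref{lem:largest badadmissible extension}, the condition $S\cap \G^+=\emptyset$ is equivalent to $S\subseteq A_\F\smallsetminus \G^+$, which is equivalent to $S$ being a subset of the largest \badadmissible extension. So the lemma reduces to showing that $S$ lies in the largest \badadmissible extension if and only if $T_S$ has a path.

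Next, I would argue that $S$ is contained in the largest \badadmissible extension if and only if $S$ is contained in some \badadmissible extension. The forward implication is immediate. For the backward implication, note that Lemma \ref{lem:largest badadmissible extension} asserts the existence of a (unique) largest \badadmissible extension, namely $A_\F\smallsetminus \G^+$; any \badadmissible extension containing $S$ is in particular a subset of this largest one, and hence $S$ is contained in it as well. (If desired, one can alternatively invoke the elementary fact that the union of an arbitrary family of \badadmissible extensions is itself \badadmissible, so the collection of \badadmissible extensions has a unique maximum element under inclusion.)

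Finally, applying Lemma \ref{lem:badadmissible extension iff path in TS} gives the equivalence between $S$ being contained in a \badadmissible extension and $T_S$ having a path, which completes the chain of equivalences. There is no real obstacle here: all the work has been concentrated into Lemmas \ref{lem:largest badadmissible extension} and \ref{lem:badadmissible extension iff path in TS}, and this lemma is essentially the statement that packages those two facts into the form that will be needed for the tree-rank analysis of $\G$ in Section \ref{sec:main results}.
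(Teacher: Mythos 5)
Your proof is correct and takes exactly the same route as the paper, which simply cites Lemmas \ref{lem:largest badadmissible extension} and \ref{lem:badadmissible extension iff path in TS}; you have merely spelled out the (correct) intermediate step that being contained in some \badadmissible extension is equivalent to being contained in the largest one, namely $A_\F\smallsetminus\G^+$.
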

	\begin{proof}
		This follows directly from Lemmas \ref{lem:largest badadmissible extension} and \ref{lem:badadmissible extension iff path in TS}.
		%
	\end{proof}
	
	The following Lemma is the property of these trees which bridges between tree-rank and grounding ordinals.
	
	\begin{lemma}\label{lem:ranks of TS}
		For any $\rho\in T_S$, if the rank of $\rho$ is $\leq \alpha$, then there is a member of $\G_{\alpha+1}$ which attacks some element of $\mran_S(\rho)$.
	\end{lemma}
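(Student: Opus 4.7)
The plan is to argue by transfinite induction on $\alpha$. Fix $\rho\in T_S$ with $\rank(\rho)\leq\alpha$ and write $|\rho|=\langle n,m\rangle$. The proof naturally splits according to the two cases in Definition~\ref{defTreeForSelfDefending}, and the central idea is that in Case~1 the witness element of $\G_{\alpha+1}$ will be $a_n$ itself.

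First I would dispatch Case~2, where $a_n$ does not attack any element of $\mran_S(\rho)$. Here the only immediate extension of $\rho$ in $T_S$ is $\rho^\smallfrown 0$, and note that $\mran_S(\rho^\smallfrown 0)=\mran_S(\rho)$. If $\rho$ were terminal its rank would be $0$, but then $\rho^\smallfrown 0$ could not be in $T_S$; so $\rho$ has $\rho^\smallfrown 0$ as an extension and $\rank(\rho^\smallfrown 0)<\rank(\rho)\leq\alpha$. The inductive hypothesis then supplies an element of $\G_{\rank(\rho^\smallfrown 0)+1}\subseteq\G_\alpha\subseteq\G_{\alpha+1}$ attacking some member of $\mran_S(\rho)$, as desired.

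The interesting case is Case~1, where $a_n$ attacks some element of $\mran_S(\rho)$. My goal is to show $a_n\in \G_{\alpha+1}=f_{\F}(\G_\alpha)$; once this is established, $a_n$ itself witnesses the conclusion. To verify defense, let $a_j$ be an arbitrary attacker of $a_n$. Then $\rho^\smallfrown(j+1)\in T_S$ with $\mran_S(\rho^\smallfrown(j+1))=\mran_S(\rho)\cup\{a_j\}$, and $\rank(\rho^\smallfrown(j+1))<\rank(\rho)\leq\alpha$; call this rank $\beta$. By the inductive hypothesis there is some $c\in\G_{\beta+1}$ attacking an element of $\mran_S(\rho)\cup\{a_j\}$. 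If $c$ attacks an element of $\mran_S(\rho)$, then since $\G_{\beta+1}\subseteq\G_{\alpha+1}$ we are already done and need not continue. Otherwise $c$ attacks $a_j$, and since $\beta<\alpha$ gives $\beta+1\leq\alpha$, we have $c\in\G_\alpha$, so $a_j\in\G_\alpha^+$. As $a_j$ was an arbitrary attacker of $a_n$, this shows $a_n\in f_{\F}(\G_\alpha)=\G_{\alpha+1}$, completing Case~1. The base case $\alpha=0$ is absorbed into Case~1: if $\rho$ is terminal then Case~2 is impossible, and the emptiness of the set of attackers of $a_n$ places $a_n$ in $\G_1$ directly.

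The main subtlety to track is the ordinal arithmetic $\beta+1\leq\alpha$, which is what forces the witness $c$ into $\G_\alpha$ (rather than only $\G_{\alpha+1}$) and thereby allows us to conclude $a_n\in\G_{\alpha+1}$ rather than $\G_{\alpha+2}$. The rest is routine unwinding of Definition~\ref{defTreeForSelfDefending}, but care is needed to handle the short-circuit when the inductively supplied $c$ happens to attack $\mran_S(\rho)$ directly, since in that branch we do not prove anything about $a_n$.
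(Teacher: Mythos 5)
Your proof is correct and follows essentially the same route as the paper's: transfinite induction on $\alpha$, splitting on the two cases of Definition~\ref{defTreeForSelfDefending}, applying the inductive hypothesis to the immediate extensions $\rho^\smallfrown 0$ and $\rho^\smallfrown (j+1)$, and using the same ordinal bookkeeping $\beta+1\leq\alpha$ to land the witness in $\G_\alpha$. The only difference is cosmetic: in Case~1 the paper argues contrapositively (assuming $a_n\notin\G_{\alpha+1}$ and extracting a single attacker $a_i\notin\G_\alpha^+$), whereas you quantify over all attackers and handle the short-circuit explicitly; the two are logically interchangeable.
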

	\begin{proof}
		We prove this by induction on $\alpha$. For $\alpha=0$, let $\rho$ be given and let $|\rho|=\langle n,m\rangle$. Then $\rho\in T_S$ having rank $0$ means it is a terminal node, so $a_n$ attacks some member of $\mran_S(\rho)$ and no element attacks $a_n$, i.e., $a_n\in \G_1$.
		Let $\alpha>0$ be fixed and fix $\rho\in T_S$ with rank $\leq \alpha$. Let $|\rho|=\langle n,m\rangle$.
		If $a_n$ does not attack any element of $\mran_S(\rho)$, then the tree above $\rho$ has rank 1 more than the rank of the tree above $\rho^\smallfrown 0$, i.e., $\alpha=\rank(\rho^\smallfrown 0)+1$. By inductive hypothesis applied to $T_S, \rho^\smallfrown 0,\rank(\rho^\smallfrown 0)$, some element of $\G_{\alpha}$ attacks some element of $\mran_S(\rho^\smallfrown 0)=\mran_S(\rho)$.
		
		Next, suppose that $a_n$ does attack an element of $\mran_S(\rho)$.
		If $a_n\in \G_{\alpha+1}$, we are done, so we suppose $a_n\notin \G_{\alpha+1}$.
		Then there must be some $i$ so that $a_i\att a_n$ and $a_i\notin \G_\alpha^+$. By inductive hypothesis, some member of $\mran_S(\rho^\smallfrown (i+1))=\mran_S(\rho)\cup \{a_i\}$ is attacked by a member of $\G_\alpha$. Since this cannot be $a_i$, we have that a member of $\G_\alpha\subseteq \G_{\alpha+1}$ attacks a member of $\mran_S(\rho)$. 
	\end{proof}
	
	Applying this Lemma to the empty string, we get the following corollary:
	
	\begin{corollary}
		If the rank of $T_{\{b\}}$ is less than $\alpha$, then $b$ is attacked by an element of $\G_{\alpha+1}$.
	\end{corollary}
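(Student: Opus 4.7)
The plan is to specialize Lemma~\ref{lem:ranks of TS} to the case $S=\{b\}$, applied at the root of the tree $T_{\{b\}}$. Two preliminary observations do all of the work. First, for the empty string $\lambda$ we have $\mran_{\{b\}}(\lambda)=\{b\}$, since the definition of $\mran_S$ only adjoins those $a_{\sigma(m)-1}$ for which $\sigma(m)\geq 1$, and $\lambda$ has no coordinates. Second, by the definition of the rank of a tree, the rank of $T_{\{b\}}$ coincides with the rank of its root $\lambda$ (in particular, if $T_{\{b\}}$ has a rank at all, then so does $\lambda$).

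With these two observations in hand the argument is immediate. Suppose the rank of $T_{\{b\}}$ is some $\beta<\alpha$. Then $\lambda\in T_{\{b\}}$ satisfies $\rank(\lambda)=\beta$, so Lemma~\ref{lem:ranks of TS} (applied with $\rho=\lambda$ and with the role of ``$\alpha$'' in that lemma played by $\beta$) supplies an element of $\G_{\beta+1}$ that attacks some member of $\mran_{\{b\}}(\lambda)=\{b\}$, i.e., an element of $\G_{\beta+1}$ that attacks $b$. Finally, I would invoke monotonicity of the sequence $(\G_\gamma)_\gamma$ in $\gamma$, which is immediate from Definition~\ref{defn:GroundingSets} (since $f_\F$ is monotone on subsets and unions preserve inclusions at limits), to conclude from $\beta+1\leq\alpha<\alpha+1$ that $\G_{\beta+1}\subseteq\G_{\alpha+1}$. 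Hence $b$ is attacked by a member of $\G_{\alpha+1}$, as required. There is no genuine obstacle here: the entire corollary is a bookkeeping application of Lemma~\ref{lem:ranks of TS} at the root of $T_{\{b\}}$, and the only point that needs to be unpacked is the identification $\mran_{\{b\}}(\lambda)=\{b\}$.
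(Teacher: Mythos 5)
Your proof is correct and is essentially the paper's argument: the paper simply states that the corollary follows by "applying this Lemma to the empty string," which is exactly your specialization $\rho=\lambda$ with $\mran_{\{b\}}(\lambda)=\{b\}$. (The only cosmetic difference is that since rank $<\alpha$ implies rank $\leq\alpha$, one can invoke Lemma~\ref{lem:ranks of TS} directly with $\alpha$ and skip the detour through $\beta+1$ and monotonicity, though your route is equally valid.)
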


	We can thus use the trees $T_{\{b\}}$ for each $b$ attacking $a$ to analyze whether or not $a$ has an attacker which is not in $\G^+$. Putting all  these trees together, we define a tree $\T^a$ as follows:
	
	\begin{definition}
		For each $a\in A_\F$, we define the tree $\T^a$ to be the tree which puts the empty string $\lambda\in \T^a$ and $i^\smallfrown \sigma \in \T^a$ if and only if $a_i\att a$ and $\sigma\in T_{\{a_i\}}$.
	\end{definition}
	
	\begin{corollary}\label{cor:Ta has a path iff a not in G}
		$\T^a$ has a path if and only if $a\notin \G$.
	\end{corollary}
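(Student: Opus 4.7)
The plan is to unfold the definition of $\T^a$ and then chain together the characterizations already proved. By construction, a path through $\T^a$ is precisely a choice of some $i$ with $a_i \att a$ together with a path through $T_{\{a_i\}}$ starting from its root. Hence $\T^a$ has a path if and only if there exists an attacker $a_i$ of $a$ such that $T_{\{a_i\}}$ has a path.

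Next I would apply Lemma \ref{disjoint from G+ iff have path} with $S = \{a_i\}$: the tree $T_{\{a_i\}}$ has a path if and only if $\{a_i\} \cap \G^+ = \emptyset$, i.e.\ if and only if $a_i \notin \G^+$. Combining with the previous paragraph, $\T^a$ has a path if and only if some attacker $a_i$ of $a$ fails to lie in $\G^+$.

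Finally, I would invoke the fixed-point property of the grounded extension. Since $\G = f_\F(\G)$, we have $a \in \G$ if and only if $\G$ defends $a$, which by Definition \ref{def:characteristic function} means that every attacker of $a$ lies in $\G^+$. Equivalently, $a \notin \G$ if and only if there is some $a_i$ with $a_i \att a$ and $a_i \notin \G^+$. Matching this with the equivalence from the previous paragraph yields $\T^a$ has a path if and only if $a \notin \G$, as desired.

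There is no serious obstacle: the corollary is simply an assembly of Lemma \ref{disjoint from G+ iff have path} with the fixed-point characterization of $\G$, mediated by the recursive structure of $\T^a$ at its root.
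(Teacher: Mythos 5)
Your proposal is correct and follows essentially the same chain of equivalences as the paper's own proof: decompose a path in $\T^a$ into a choice of attacker $a_i$ plus a path in $T_{\{a_i\}}$, apply Lemma \ref{disjoint from G+ iff have path}, and finish with the fixed-point property $\G = f_\F(\G)$. Your version is slightly more explicit about why being defended by $\G$ is equivalent to membership in $\G$, but the argument is the same.
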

	\begin{proof}
		$\T^a$ has a path if and only if there is some $a_i$ with $a_i\att a$ and so $\T_{\{a_i\}}$ has a path if and only if there is some $a_i\att a$ so $a_i\notin \G^+$, if and only if $a$ is not defended by $\G$ if and only if $a\notin\G$.
	\end{proof}
	
	
	Finally, we connect the rank of the tree $\T^a$ to the ordinal $\alpha$ so that $a\in \G_\alpha$.
	
	\begin{corollary}\label{uniform bounding wins}
		If $\T^a$ has rank $\leq \alpha$, then $a\in \G_{\alpha+1}$.
	\end{corollary}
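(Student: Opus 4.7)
The plan is to decompose $\T^a$ at its root and transfer the resulting rank information into the ordinal stage at which $a$ enters the grounded approximation. By the definition of $\T^a$, the immediate children of the root are exactly the one-element strings $\langle i\rangle$ for which $a_i\att a$, and the subtree of $\T^a$ above $\langle i\rangle$ is naturally isomorphic (via $i^\smallfrown \sigma \leftrightarrow \sigma$) to $T_{\{a_i\}}$. Hence $\rank(\langle i\rangle) = \rank(T_{\{a_i\}})$. From the recursive ``$\sup$ of $\rank(\tau)+1$'' clause in the definition of rank, the hypothesis $\rank(\T^a)\leq \alpha$ forces $\rank(T_{\{a_i\}}) + 1 \leq \alpha$ for every attacker $a_i$ of $a$; this covers the zero, successor, and limit cases uniformly.

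Next, I would feed each of these bounds into Lemma \ref{lem:ranks of TS}, applied at the empty string of $T_{\{a_i\}}$ with $S=\{a_i\}$ and $\beta_i := \rank(T_{\{a_i\}})$. Since $\mran_{\{a_i\}}(\lambda)=\{a_i\}$, the Lemma delivers a member of $\G_{\beta_i+1}$ that attacks $a_i$. Because $\beta_i + 1 \leq \alpha$ and the sequence $(\G_\gamma)_\gamma$ is monotone increasing in $\gamma$ (a standard consequence of the monotonicity of $f_\F$ together with $\G_0=\emptyset$), we conclude that $a_i \in \G_\alpha^+$.

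Since this holds for every attacker of $a$, the set $\G_\alpha$ defends $a$; by the defining clauses of $f_\F$ and of $\G_{\alpha+1}$ this yields $a \in f_\F(\G_\alpha) = \G_{\alpha+1}$, as claimed. There is no real obstacle: the entire technical content sits in Lemma \ref{lem:ranks of TS}, and this corollary is just the bookkeeping step that transfers the bound from the local trees $T_{\{a_i\}}$ (which analyze individual attackers) to the assembled tree $\T^a$ (which analyzes $a$ itself). The one point deserving care is reading off the strict drop in rank across an edge of $\T^a$ from the sup-definition of tree rank, which is what ensures $\beta_i+1\leq\alpha$ rather than merely $\beta_i\leq\alpha$.
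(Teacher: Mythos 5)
Your proposal is correct and follows essentially the same route as the paper: bound the rank of each $T_{\{a_i\}}$ strictly below $\alpha$ via the sup-clause at the root of $\T^a$, apply Lemma \ref{lem:ranks of TS} at the empty string to get an attacker of each $a_i$ in $\G_\alpha$, and conclude $a\in f_\F(\G_\alpha)=\G_{\alpha+1}$. Your write-up merely makes explicit the monotonicity of $(\G_\gamma)_\gamma$ and the isomorphism between the subtree above $\langle i\rangle$ and $T_{\{a_i\}}$, which the paper leaves implicit.
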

	\begin{proof}
		If $\T^a$ has rank $\leq \alpha$ then every $a_i$ which attacks $a$ has $\text{rank}(T_{\{a_i\}}) <\alpha$. Thus, each element of $\A_\F$ which attacks $a$ is itself attacked by a member of $\G_\alpha$, and therefore $a\in \G_{\alpha+1}$.
	\end{proof}

	\subsection{Concluding Upper Bounds}\label{sec:main results}

	\begin{theorem}\label{thm:CK upper bound} Let $\F$ be a computable AF. Then the grounding ordinal is $\leq \CK$. That is, the grounded extension equals $\G_{\CK}$, i.e., $\G=\G_{\CK}$.
	\end{theorem}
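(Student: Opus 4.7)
The plan is to assemble the machinery from Section \ref{sec:tree rank analysis of groundedness} with the bounding theorem for ranks of computably enumerable trees (Theorem \ref{Earlier Bounding Theorem}). The strategy is: for each $a \in \G$, use the tree $\T^a$ to bound the stage at which $a$ first enters the iterative approximation of $\G$, and observe that all such stages must lie below $\CK$.

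First, I would show that $\G \subseteq \G_{\CK}$, since the reverse inclusion $\G_{\CK} \subseteq \G$ is immediate from the definition of $\G$ as the least fixed-point and Dung's proof that the iterative approximation stabilizes at $\G$. So fix $a \in \G$. By Corollary \ref{cor:Ta has a path iff a not in G}, the tree $\T^a$ has no path. I would then verify that $\T^a$ is computable (in fact uniformly in $a$): since $\F$ is a computable AF, the attack relation is decidable, so membership in each $T_{\{a_i\}}$ is decidable by inspection of Definition \ref{defTreeForSelfDefending}, and therefore membership in $\T^a$ is decidable as well. In particular, $\T^a$ is computably enumerable.

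Applying Theorem \ref{Earlier Bounding Theorem} to $\T^a$, its rank is some ordinal $\alpha_a < \CK$. By Corollary \ref{uniform bounding wins}, it follows that $a \in \G_{\alpha_a + 1}$. Since $\CK$ is a limit ordinal (as the supremum of the computable ordinals, closed under the successor operation), we have $\alpha_a + 1 < \CK$, and hence $a \in \G_{\alpha_a+1} \subseteq \G_{\CK}$. As $a \in \G$ was arbitrary, $\G \subseteq \G_{\CK}$, and combining with the reverse inclusion yields $\G = \G_{\CK}$, proving that the grounding ordinal of $\F$ is $\leq \CK$.

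The proof is essentially an assembly step, so there is no substantial obstacle here; the real technical work was already done in constructing the trees $T_S$ and establishing the rank-to-stage correspondence of Lemma \ref{lem:ranks of TS}. The only point requiring a moment of care is the observation that $\CK$ is a limit ordinal, so that the bound $\alpha_a < \CK$ produced by Theorem \ref{Earlier Bounding Theorem} genuinely yields $\alpha_a + 1 \leq \CK$ and thus membership in $\G_{\CK}$ rather than only in $\G_{\CK+1}$.
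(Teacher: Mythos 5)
Your proposal is correct and follows essentially the same route as the paper's proof: fix $a\in\G$, note that the computable tree $\T^a$ has no path by Corollary \ref{cor:Ta has a path iff a not in G}, bound its rank below $\CK$ via Theorem \ref{Earlier Bounding Theorem}, and conclude $a\in\G_{\alpha+1}\subseteq\G_{\CK}$ by Corollary \ref{uniform bounding wins}. The extra remarks you add (computability of $\T^a$ and the fact that $\CK$ is a limit ordinal) are correct and only make explicit what the paper leaves implicit.
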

	\begin{proof}
		Suppose that $a\in \G$. Then the computable tree $\T^a$ has no path. By Theorem \ref{Earlier Bounding Theorem},
		the rank $\alpha$ of $\T^a$ is $<\CK$. It follows from Corollary \ref{uniform bounding wins} that $a\in \G_{\alpha+1}$, thus $a\in \G_{\CK}$.
	\end{proof}

	
	
	
	\section{Conclusions beyond computable AFs}\label{sec:conclusions for non-computable AFs}
	
	Though our analysis was focused on computable AFs, everything relativizes to any Turing degree:
	
	\begin{theorem}\label{Can be Complete-d}\label{make it big-d}
		Let $d$ be a Turing degree. There are $d$-computable AFs where the grounding ordinal is $\CKD$.
		There are $d$-computable AFs where $\G$ is $\Pi^1_1(d)$-complete. 
	\end{theorem}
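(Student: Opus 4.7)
The plan is to relativize the proof of Theorem \ref{make it big} (together with the tree-rank analysis that underlies Theorem \ref{thm:CK upper bound}) to an arbitrary Turing degree $d$. As announced at the end of Section \ref{sec:background}, the computability-theoretic machinery of the paper --- the indexing of computable AFs, the tree-rank arguments, Kleene's characterization of $\Sigma^1_1$ sets, and the bounding theorems --- all admit direct relativizations, obtained by replacing ``computable''/``c.e.'' with ``$d$-computable''/``$d$-c.e.'' and ``$\CK$'' with ``$\CKD$'' throughout. The task is to verify that the two constructions of Section \ref{sec:maximially long and hard} go through in this relativized setting.

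First I would produce a $d$-computable analog of Lemma \ref{allTreesTogether}: a $d$-computable AF $\F^d$ isomorphic to the disjoint union of $\{\F_\T : \T \text{ a $d$-c.e.\ tree}\}$. The construction of $\F_\T$ from Example \ref{exampleAnyComputableOrdinal} is purely combinatorial in $\T$, so it yields a $d$-computable AF whenever $\T$ is $d$-c.e., and a standard uniform enumeration of $d$-c.e.\ indices allows assembling the disjoint union $d$-computably. Since Theorem \ref{thm:characterizeGsInExample} is oracle-free, the root argument $a_{\lambda_\T}$ of $\F_\T$ lies in $\G(\F^d)$ iff $\T$ has no path. The relativized Kleene Theorem \ref{Kleene} then shows $\G(\F^d)$ is $\Pi^1_1(d)$-hard, while the relativization of Observation \ref{obs:grounded extensions are Pi11} gives the matching upper bound: the defense function is defined uniformly from the $d$-computable attack relation, so the grounded extension of any $d$-computable AF remains $\Pi^1_1(d)$.

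For the grounding ordinal, the lower bound $\geq \CKD$ follows as in the proof of Theorem \ref{make it big}: by the relativization of Theorem \ref{thm:high ranks for trees}, every $\alpha < \CKD$ is the rank of some $d$-computable tree, and each such tree appears as a component of $\F^d$, so we cannot have $\G = \G_\alpha$ for any such $\alpha$. The upper bound $\leq \CKD$ comes from replaying the tree-rank analysis of Section \ref{sec:tree rank analysis of groundedness} with trees $T_S$ and $\T^a$ built relative to a $d$-computable attack relation; these remain $d$-computable trees, and the relativized bounding theorem (Theorem \ref{Earlier Bounding Theorem}) bounds the ranks of such trees without paths by $\CKD$, so the relativized Corollary \ref{uniform bounding wins} yields $\G = \G_{\CKD}$. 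The only obstacle worth noting is the bookkeeping of ensuring each auxiliary construction stays uniformly $d$-computable when the attack relation is given by a $d$-oracle; this is immediate from inspection of Definitions \ref{indexing of computable AFs} and \ref{defTreeForSelfDefending}, so no genuinely new argument beyond the unrelativized case is required.
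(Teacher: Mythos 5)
Your proposal is correct and follows essentially the same route as the paper: the paper's proof of this theorem likewise just relativizes the construction of Theorem \ref{make it big} to $d$ (taking a $d$-computable enumeration of $d$-computable trees), and invokes the relativized versions of Theorem \ref{thm:high ranks for trees} for the lower bound and Theorem \ref{thm:CK upper bound} for the upper bound on the grounding ordinal. Your extra care in checking that the auxiliary trees $T_S$ and $\T^a$ remain $d$-computable is exactly the bookkeeping the paper implicitly relies on when it declares that all results of Section \ref{sec:background} relativize.
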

	\begin{proof}
		We use the same construction as in Theorem \ref{Can be Complete} relativized to $d$. We take a $d$-computable sequence of all $d$-computable trees $\T_i$ and construct a $d$-computable AF $\F$ so that $\G$ is $\Pi^1_1(d)$-complete. It follows from the relativized version of Theorem \ref{thm:high ranks for trees} that the grounding ordinal of $\F$ is $\geq \CKD$.
		It follows from the relativized version of Theorem \ref{thm:CK upper bound} that the grounding ordinal of $\F$ is $\leq \CKD$.
	\end{proof}

	\begin{theorem}\label{thm: arbitrarily large countable grounding ordinals}
		For any countable ordinal $\alpha$, there is a countable AF $\F$ such that the grounding ordinal for $\F$ is $\alpha$.
	\end{theorem}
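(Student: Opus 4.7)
The plan is to obtain this result by relativizing the constructions of Theorems \ref{HittingAnySuccessor} and \ref{HittingLimitOrdinals}. The key observation is that every countable ordinal is computable relative to some oracle: given a countable ordinal $\alpha$, fix any well-ordering of $\nat$ of order type $\alpha$ and let $d$ be its Turing degree. Then $\alpha$ is a $d$-computable ordinal in the sense of the relativized version of Definition~\ref{definition: omega1CK}, since $\alpha < \CKD$ (indeed $\alpha$ itself has a $d$-computable presentation).

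With $d$ fixed, I would check that the constructions in Example~\ref{exampleAnyComputableOrdinal} and in Theorems~\ref{HittingAnySuccessor} and~\ref{HittingLimitOrdinals} go through verbatim relative to $d$. The tree $\T$ used in Example~\ref{exampleAnyComputableOrdinal} only needs to be $d$-computable for $\F_\T$ to be $d$-computable, and the rank analysis in Theorem~\ref{thm:characterizeGsInExample} is purely structural and uses no computability content. By the relativized form of Theorem~\ref{thm:high ranks for trees}, for the chosen $d$-computable ordinal $\alpha$ one can build a $d$-computable tree of rank $\alpha$, and then the relativized versions of Theorems~\ref{HittingAnySuccessor} and~\ref{HittingLimitOrdinals} produce a $d$-computable AF $\F_\alpha$ with grounding ordinal exactly $\alpha$.

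Finally, since every $d$-computable AF is automatically countable (its argument set is indexed by $\nat$), the AF $\F_\alpha$ is the desired countable AF. I would therefore state the proof essentially as: pick $d$ computing a well-ordering of type $\alpha$; apply the relativized Theorem~\ref{HittingLimitOrdinals} (which already handles both successor and limit cases, and the finite case is trivial) to produce $\F_\alpha$; observe that $\F_\alpha$ is countable. Combined with Observation~\ref{grounding ordinals are countable}, this also shows that the grounding ordinals of countable AFs are exactly the countable ordinals, as promised in Section~\ref{sec: back argumentation}.

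I do not expect a serious obstacle: the main point is simply verifying that nothing in the earlier constructions relied on true computability rather than $d$-computability, which is apparent from inspection since all the relevant combinatorial arguments (tree rank, defense function, attack relation) relativize uniformly.
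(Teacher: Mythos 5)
Your proposal is correct and follows essentially the same route as the paper: fix a well-ordering of $\nat$ of type $\alpha$, let $d$ be its degree so that $\alpha$ is $d$-computable, and apply the relativized Theorem~\ref{HittingLimitOrdinals} to obtain a $d$-computable (hence countable) AF with grounding ordinal $\alpha$. The extra verification you sketch, that the earlier constructions relativize, is exactly what the paper implicitly relies on via its blanket remark that all results in the background section hold in relativized form.
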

	\begin{proof}
		For any countable ordinal $\alpha$, there is an ordering on $\nat$ which is isomorphic to $\alpha$. Let $d$ be the Turing degree of this ordering. Then $\alpha$ is $d$-computable. By Theorem \ref{HittingLimitOrdinals} relativized to $d$, there is a $d$-computable AF $\F$ so that the grounding ordinal is $\alpha$.
	\end{proof}
	
	The following Corollary is immediate from Observation \ref{grounding ordinals are countable} and Theorem \ref{thm: arbitrarily large countable grounding ordinals}.
	
	\begin{corollary}\label{bound grounding ordinal for countable AFs}
		An ordinal $\alpha$ is a grounding ordinal of a countable AF if and only if $\alpha$ is countable.
	\end{corollary}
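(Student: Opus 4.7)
The plan is to prove the corollary as a two-direction equivalence, with both directions already furnished by results stated just above. The overall strategy is simply to package the forward direction from the cardinality observation and the backward direction from the existence theorem obtained by relativization.

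For the forward direction, I would start by assuming $\alpha$ is the grounding ordinal of some countable AF $\F$. I would then invoke Observation \ref{grounding ordinals are countable}, which asserts that grounding ordinals of countable AFs are countable; the argument there is a straightforward cardinality argument, since if the grounding ordinal were uncountable we would be producing uncountably many distinct $\G_\beta$ inside the countable set $A_\F$. So $\alpha$ must be countable.

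For the backward direction, I would fix a countable ordinal $\alpha$ and apply Theorem \ref{thm: arbitrarily large countable grounding ordinals}, which directly produces a countable AF whose grounding ordinal is $\alpha$. The content is already contained in that theorem: one represents $\alpha$ by a well-ordering on $\nat$, lets $d$ be its Turing degree so that $\alpha$ is $d$-computable, and then appeals to Theorem \ref{HittingLimitOrdinals} relativized to $d$.

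I do not anticipate any real obstacle; the proof is essentially a one-line citation of the two preceding results. The only thing to be a bit careful about is noting that both implications are between the same classes (ordinals $\alpha$ that are countable, and ordinals $\alpha$ that arise as grounding ordinals of countable AFs), so the biconditional holds exactly as stated.
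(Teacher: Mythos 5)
Your proposal is correct and matches the paper exactly: the paper states that the corollary is immediate from Observation \ref{grounding ordinals are countable} (forward direction) and Theorem \ref{thm: arbitrarily large countable grounding ordinals} (backward direction), which is precisely the two-citation packaging you describe.
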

	
	Since our analysis is constructive and we have strong bounds on the levels in the analysis, we are also able to extend the work begun by Spanring \cite{spanringThesis} and show that not only is the Axiom of Choice unnecessary to prove the existence and uniqueness of the grounded extension, but the very weak base theory of Kripke–Platek set theory (KP) suffices (see \cite{barwiseBook} for a reference on KP). \footnote[5]{We thank Noah Schweber for providing the proof of Theorem \ref{KP suffices}.}
	
	\begin{restatable}{theorem}{KPSuffices}
		\label{KP suffices}
		There exists a formula $\psi(x,y)$ in the language of set theory so that, for any AF $\F$ in a model of KP, $\psi(x,\F)$ defines the grounded extension of $\F$. Further, KP proves that for every AF $\F$, $\psi(x,\F)$ defines a fixed point of $f_{\F}$ in $\F$. Further, KP proves for each $\rho(x,y)$ that for every $Y$, if $\rho(x,Y)$ defines a fixed point of $f_{\F}$ in $\F$, then $\forall x (\psi(x,\F)\rightarrow \rho(x,Y))$; that is, among all classes, $\psi$ is the least fixed-point of $f_{\F}$ in $\F$. 
	\end{restatable}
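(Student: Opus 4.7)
The plan is to exploit the tree-rank analysis of Section \ref{sec:tree rank analysis of groundedness}, which characterizes membership in the grounded extension by well-foundedness of an explicitly constructed tree---a notion KP handles cleanly via rank functions. For an AF $\F$ given as a set, the tree $\T^x_\F$ is $\Delta_0$-definable from the parameters $\F$ and $x$, so I would define
\[
\psi(x, \F) \;\equiv\; x \in A_\F \wedge \T^x_\F \text{ is well-founded},
\]
where well-foundedness is expressed in KP by the existence of a rank function into the ordinals. That $\psi(\cdot, \F)$ extensionally equals $\G$ in any universe is immediate from Corollary \ref{cor:Ta has a path iff a not in G}; the content of the theorem is that the fixed-point and minimality properties are provable in KP.

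To verify in KP that $\psi(\cdot, \F)$ defines a fixed point of $f_\F$, I would prove both inclusions by direct $\Sigma$-recursion on the ranks of the trees $T_S$ and $\T^x_\F$, essentially reading the inductions in Lemma \ref{lem:ranks of TS} and Corollary \ref{uniform bounding wins} as set-theoretic constructions inside the model rather than as metamathematical statements. For $\psi \subseteq f_\F(\psi)$, given $\psi(a, \F)$ and $b \att a$, the subtree $T_{\{b\}}$ of $\T^a_\F$ is well-founded, and an $\in$-recursion on its rank function produces an attacker $c$ of $b$ whose own tree $\T^c_\F$ carries a rank function, so $\psi(c, \F)$ holds. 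For the reverse inclusion, one glues, via $\Sigma$-collection, the rank functions witnessing $\psi$ at the defenders of each attacker of $a$ into a rank function for $\T^a_\F$. For the minimality schema, suppose $\rho(x, Y)$ defines a fixed point of $f_\F$; one argues by induction on the rank of $\T^x_\F$ that $\psi(x, \F) \to \rho(x, Y)$, extracting at each step a defender $c$ of an attacker $b$ of $x$ whose subtree has strictly smaller rank, applying the inductive hypothesis to $c$, and concluding via the fixed-point assumption on $Y$. KP's ordinal induction for $\Sigma$-formulas validates this for each individual $\rho$, giving the desired schema.

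The main obstacle will be ensuring that the ranks extracted from $\T^x_\F$ are genuine ordinals of the ambient model; a priori, an inner model of KP might be so thin that some such rank fails to materialize inside it even though $\T^x_\F$ itself does. This is exactly where the forcing argument enters: by Levy-collapsing the transitive closure of $\F$ to be encoded by a generic real $d$, the extension sees $\F$ as $d$-computable, and the relativized Theorem \ref{thm:CK upper bound} bounds all the relevant ranks by $\CKD$, which exists in the extension. Since $\T^x_\F$ and the statement ``$\T^x_\F$ is well-founded'' are absolute between the ground model and its forcing extension, the ranks pull back to the ground model, and both the fixed-point and minimality properties of $\psi$ transfer. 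Internalizing this absoluteness argument inside KP, so that the theorem genuinely becomes a theorem scheme of KP rather than merely of ZFC, is the technical heart of the proof.
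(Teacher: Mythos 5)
Your proposal assembles the right ingredients (the tree--rank characterization, the Levy collapse, preservation of admissibility), but the way you combine them has a genuine gap at the crucial step. You define $\psi(x,\F)$ directly as ``$\T^x_\F$ has a rank function'' \emph{in the ground model} and then claim that, after collapsing $\F$ to be countable, ``the ranks pull back to the ground model'' by absoluteness. Two things go wrong here. First, the tree $\T^x$ is built from a fixed enumeration $A_\F=\{a_n:n\in\nat\}$; if $A_\F$ is uncountable in the model $A$, the tree does not exist in $A$ at all --- it only comes into being in the extension $A[G]$, and it depends on the generic enumeration, so it is not an object you can be absolute \emph{about}. Second, even granting a tree in the ground model, ``has a rank function'' is not downward absolute from $A[G]$ to $A$ for mere models of KP: the canonical rank function is a new subset of $A$-sets living in $A[G]$, and there is no general reason a thin admissible set contains it. (This is exactly the phenomenon that makes the whole theorem delicate; for transitive models of enough ZF one gets absoluteness via Mostowski collapse, but KP is too weak for that argument.) What is actually invariant across generics is the \emph{truth value} of ``$x$ is in the grounded extension,'' not the witnessing rank function.

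The paper's resolution is precisely to avoid pulling any witness back: it first isolates a formula $\psi_c(x,\F)$ (your $\psi$) that provably defines $\G$ whenever $\F$ is \emph{countable} in a model of KP --- here admissibility guarantees the $\F$-computable rank ordinal is in the model --- and then defines the final $\psi(x,\F)$ to be the statement ``$\emptyset\Vdash\psi_c(x,\F)$ in $Col(\omega,A_\F)$,'' which is a first-order formula about the ground model by Barwise's definability of forcing over admissible sets. No rank function ever needs to exist in $A$. Your second paragraph is also doing unnecessary work: rather than internalizing Lemma \ref{lem:ranks of TS} and the minimality induction as $\Sigma$-recursions inside KP (which would again require the ranks to exist internally), the paper observes that once $\psi$ is known to define $\G$ in \emph{every} model of KP, the fixed-point and minimality statements are true in every model of KP and hence provable by the completeness theorem. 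I would encourage you to rework the definition of $\psi$ around the forcing relation itself; the rest of your outline then collapses to the countable case you already handle correctly.
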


	\section{The Turing degrees of grounded extensions}\label{sec:Turing degrees missing}
	
	We finally turn to the question of determining the exact complexity of the membership problem for the grounded extension of a given computable AF $\F$. Since $\G$ is a $\Pi^1_1$ set, and  Theorem \ref{make it big} demonstrated that $\G$ can be maximally complex among the $\Pi^1_1$-sets, it is natural to conjecture that the Turing degree of \emph{any} $\Pi^1_1$ set could correspond to the complexity of $\G$ for some computable $\F$. It is a common phenomenon in computability theory that once the complexity of natural class $\C$ of sets is identified, this gives a characterization of the Turing degrees of members of $\C$. For example, the Turing degrees of word problems of finitely presented groups are exactly the Turing degrees of semicompuable sets \cite{Groups1,Groups2,Groups3}.
	
	However, we show that this is not the case for grounded extensions. We show this by identifying a new notion of a set being \emph{\uswp} which we show holds for grounded extensions, showing that they have more structural uniformity than a general $\Pi^1_1$ set. 
	This notion is of independent interest in computability theory.
	
	
	
	
	\begin{defn}
		A set $X$ is \emph{\uswp} if there is a computable sequence of computable trees $(\T_i)_{i\in \nat}$ and a Turing functional $\Psi(A,n)$ so that $\T_i$ has no path if and only if $i\in X$. Further, whenever $i\notin X$, then $\Psi(X',i)$ is a path in $\T_i$. 
	\end{defn}

	%

	\begin{restatable}{lem}{groundedExtsareUSWP}
		\label{groundedExts are USWP}
		Let $\F$ be a computable argumentation framework. Then the grounded extension is \uswp.
	\end{restatable}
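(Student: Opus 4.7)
The plan is to use the trees from Section \ref{sec:tree rank analysis of groundedness}. Specifically, I would take $\T_i := \T^{a_i}$. Since $\F$ is computable, the sequence $(\T^{a_i})_{i \in \nat}$ is uniformly computable, and Corollary \ref{cor:Ta has a path iff a not in G} already establishes that $\T^{a_i}$ has no path iff $a_i \in \G$ (identifying the grounded extension with $\{i : a_i \in \G\}$). The heart of the proof is then the construction of the functional $\Psi$: given oracle access to $\G'$ and input $i$ with $a_i \notin \G$, produce a path through $\T^{a_i}$.

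The key observation is that, by Lemma \ref{lem:largest badadmissible extension}, the set $X := A_\F \smallsetminus \G^+$ is the largest \badadmissible extension, and $X$ is $\G'$-computable: $a_k \in \G^+$ iff $(\exists j)(a_j \in \G \wedge a_j \att a_k)$, which is $\Sigma^0_1(\G)$, so $X$ is $\Pi^0_1(\G)$ and hence $\leq_T \G'$. Given this, $\Psi$ builds the path greedily. Since $a_i \notin \G = f_\F(\G)$, there is an attacker of $a_i$ outside $\G^+$; using $\G'$, find the least such index $j_0$ and emit it as the first entry. Then traverse $T_{\{a_{j_0}\}}$ per Definition \ref{defTreeForSelfDefending}: at position $\sigma$ with $|\sigma| = \langle n, m \rangle$, emit $0$ if $a_n$ attacks no element of $\mran_{\{a_{j_0}\}}(\sigma)$; otherwise, use $\G'$ to find the least $k$ with $a_k \in X$ and $a_k \att a_n$, and emit $k+1$.

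The single fact requiring verification is that such a witness $k$ always exists when needed. Inductively, $\mran_{\{a_{j_0}\}}(\sigma) \subseteq X$ at every stage (since $a_{j_0} \in X$ and each newly emitted index $k+1$ corresponds to an $a_k \in X$), and since $X$ is \badadmissible, any attacker of an element of $X$ is itself attacked by an element of $X$, so a witness always exists. The $\G'$-bounded search for the least such $k$ terminates by this existence. I do not anticipate any deeper obstacle; the argumentation-theoretic content is entirely packaged into Lemma \ref{lem:largest badadmissible extension} and Definition \ref{defTreeForSelfDefending}, and the computability-theoretic content reduces to the $\Pi^0_1(\G)$-definability of $X$ together with this uniform greedy choice.
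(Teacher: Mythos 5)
Your proposal is correct and follows essentially the same route as the paper: take $\T_i=\T^{a_i}$, note that $\G^+$ (equivalently its complement $X$) is computable from $\G'$, and compute a path by first choosing an attacker of $a_i$ outside $\G^+$ and then greedily traversing $T_{\{a_{j_0}\}}$ exactly as in the proof of Lemma \ref{lem:badadmissible extension iff path in TS}. The paper's version is terser, but your added details (the $\Pi^0_1(\G)$ bound on $X$ and the invariant $\mran_{\{a_{j_0}\}}(\sigma)\subseteq X$) are precisely the verifications it leaves implicit.
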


	
	
	The proof of the following Lemma requires several ideas from computability, including relativized hyperimmune-freeness and Friedberg jump-inversion.
	
	\begin{restatable}{lemma}{PIINOTUSWP}
		\label{There are Pi11 degrees which do not contain a USWP}
		There are Turing degrees of $\Pi^1_1$ sets which do not contain a set which is \uswp.
	\end{restatable}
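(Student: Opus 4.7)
The strategy combines a structural observation about \uswp witnesses with Friedberg jump inversion and the relativized hyperimmune-free basis theorem.

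\emph{Structural consequence.} The plan is first to show: if $X$ is \uswp with witness $(\T_i,\Psi)$ and there is a set $Z$ over which $X'$ is hyperimmune-free (every total function $\leq_T X'$ is dominated by some function $\leq_T Z$), then $X \leq_T Z'$. Define the total function $h(i,n):=\Psi(X',i)(n)$ when $i\notin X$ and $h(i,n):=0$ otherwise; since $X\leq_T X'$, the function $h$ is total and $\leq_T X'$. By hyperimmune-freeness, together with a finite constant shift that converts eventual domination into pointwise majorization, there is $g\leq_T Z$ with $h(i,n)\leq g(i,n)$ for all $i,n$. The finitely branching, $g$-computable subtree $\T_i^g:=\{\sigma\in\T_i : \sigma(m)\leq g(i,m)\text{ for all }m<|\sigma|\}$ contains $\Psi(X',i)$ as a path whenever $i\notin X$ and, being a subtree of $\T_i$, has no path whenever $i\in X$. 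By K\"onig's lemma, ``$\T_i^g$ has a path'' is a $\Pi^0_1(g)$ condition on $i$, so $\nat\smallsetminus X$ is c.e.\ in $g$ and $X\leq_T g'\leq_T Z'$. Since $Y\equiv_T X$ forces $Y'\equiv_T X'$, this conclusion holds uniformly across the Turing degree of $X$.

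\emph{Realization.} It therefore suffices to exhibit a $\Pi^1_1$ Turing degree $d$ and a set $Z$ with $d'$ hyperimmune-free over $Z$ but $d\not\leq_T Z'$. Take $Z=\emptyset'$, and fix a nonempty $\Pi^0_1(\emptyset')$ class $P$ whose members are all hyperarithmetic but none of which is $\leq_T\emptyset'''$; concretely, one takes paths through a $\emptyset'$-computable tree coding a chosen hyperarithmetic set of Turing degree not below $\emptyset'''$ into every path. By the relativized hyperimmune-free basis theorem, $P$ has a member $A\geq_T\emptyset'$ that is hyperimmune-free over $\emptyset'$. By Friedberg jump inversion relative to $\emptyset'$, there is $X\leq_T A$ with $X'\equiv_T A$. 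Then $X'\equiv_T A$ is hyperimmune-free over $\emptyset'$; $X\leq_T A$ is hyperarithmetic, hence $\Pi^1_1$, so $\deg(X)$ is a $\Pi^1_1$ degree; and $X\not\leq_T\emptyset''$ because $X\leq_T\emptyset''$ would force $X'\leq_T\emptyset'''$, contradicting $X'\equiv_T A\not\leq_T\emptyset'''$. The structural consequence then precludes any $Y\equiv_T X$ from being \uswp.

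\emph{Main obstacle.} The delicate step is constructing $P$. Hyperimmune-freeness over $\emptyset'$ already forbids $A\geq_T\emptyset''$, so the member selected by the basis theorem necessarily has Turing degree incomparable with $\emptyset''$. Arranging simultaneously that this degree is not $\leq_T\emptyset'''$ while $A$ stays hyperarithmetic requires a coding into the paths of a $\emptyset'$-computable tree that injects enough degree-theoretic content without producing functions fast-growing enough to defeat the hyperimmune-free basis step. Balancing these opposing constraints, using the relativized low basis together with the hyperimmune-free basis and Friedberg jump inversion, is the core technical difficulty of the proof.
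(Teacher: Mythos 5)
Your ``structural consequence'' is correct, and it is essentially the paper's own counting argument in a sharpened, uniform form: the paper likewise uses hyperimmune-freeness of $X'$ over $\demp'$ to dominate paths of the witnessing trees by a $\demp'$-computable function, prunes to finitely branching trees, applies K\"onig's Lemma, and derives an arithmetic upper bound on $X$ that contradicts the choice of $X'$. Your version, dominating the single two-variable function $h(i,n)=\Psi(X',i)(n)$ all at once to conclude $X\leq_T Z'$, is cleaner bookkeeping of the same idea (one slip: $\nat\smallsetminus X$ is \emph{co-}c.e.\ in $g$, not c.e., but the conclusion $X\leq_T g'$ stands).

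The genuine gap is in the ``Realization'' step: the class $P$ you ask for does not exist, so the difficulty you flag at the end is not a delicate balancing act but an impossibility. Every nonempty $\Pi^0_1(\emptyset')$ class in Cantor space has a member computable from $\emptyset''$, namely its leftmost path: for an $\emptyset'$-computable binary tree, ``the subtree above $\tau$ is infinite'' is a $\Pi^0_1(\emptyset')$ property (the quantifier over nodes of a given length is bounded), hence decidable in $\emptyset''$, and one descends through the tree always taking the leftmost child with infinite subtree. So no nonempty $\Pi^0_1(\emptyset')$ class can have ``no member $\leq_T\emptyset'''$''. (The hyperarithmeticity requirement is independently self-defeating: if all members are hyperarithmetic, the class has no perfect subset and is therefore countable, hence has an isolated member, which is again $\leq_T\emptyset''$. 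And moving to Baire space does not help, since the hyperimmune-free basis theorem needs compactness.) This is why no basis theorem can do this job: a basis theorem only hands you \emph{some} member of a class, and every such class already contains members far too simple for your purposes. What is needed is a construction in which ``$A\not\leq_T c$'' is met by explicit diagonalization, and that is exactly what the paper does: Theorem \ref{HIFs exist} is a relativized Miller--Martin forcing with $\demp'$-computable function-trees, producing $Y$ that is $\demp'$-hyperimmune-free, satisfies $Y\not\leq_T\demp^{(5)}$ by diagonalization at the even stages, and is $\leq_T\demp^{(6)}$ (hence arithmetic, hence of $\Pi^1_1$ degree after Friedberg jump inversion, Theorem \ref{FriedbergJumpInversion}). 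Your structural half combined with that construction (indeed with the weaker choice $c=\demp'''$, thanks to your sharper bound $X\leq_T Z'$) completes the proof; the basis-theorem substitute for it cannot be repaired.
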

	These two Lemmas yield the following theorem.
	\begin{theorem}
		
		There are $\Pi^1_1$-sets $X$ so that the Turing degree of $X$ is not the Turing degree of the grounded extension of any computable argumentation framework.
	\end{theorem}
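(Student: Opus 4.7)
The plan is to derive the theorem directly from the two immediately preceding lemmas, with no further construction required. By Lemma \ref{There are Pi11 degrees which do not contain a USWP}, fix a $\Pi^1_1$ set $X$ whose Turing degree $\deg(X)$ does not contain any set that is \uswp. I claim this $X$ witnesses the theorem. Suppose toward a contradiction that there is a computable argumentation framework $\F$ whose grounded extension $\G$ satisfies $\deg(\G) = \deg(X)$. Then, by Lemma \ref{groundedExts are USWP}, $\G$ is \uswp, so $\G$ is a \uswp set lying in the degree $\deg(X)$, contradicting our choice of $X$.

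The reason this is the natural framing is that the two lemmas have been set up precisely to pinch the problem between an upper and lower structural bound: the first lemma shows that every grounded extension of a computable AF is not merely $\Pi^1_1$ but in fact \uswp, and the second shows that this strictly stronger property is not preserved under Turing equivalence — there is a $\Pi^1_1$ degree that avoids the class of \uswp sets altogether. The theorem is then just the contrapositive: any degree realized as the degree of some grounded extension of a computable AF must already contain a \uswp representative.

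The main obstacle is thus not in this deduction, which is a one-paragraph diagram chase, but is entirely absorbed into the two supporting lemmas. In particular, the substantive difficulty lies in Lemma \ref{There are Pi11 degrees which do not contain a USWP}, whose proof (as flagged in the excerpt) uses relativized hyperimmune-freeness together with Friedberg jump-inversion to produce a $\Pi^1_1$ degree in which no set can uniformly compute the branches witnessing its own non-membership. Given those two lemmas, no further computability-theoretic machinery is needed for the theorem itself.
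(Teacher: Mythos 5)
Your deduction is correct and is exactly how the paper obtains the theorem: it states that the two lemmas yield the result, and your contradiction argument (a grounded extension of the posited degree would be a \uswp set in a degree chosen to contain none) is the intended one-line combination. Nothing further is needed.
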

	
	A natural direction for future research is to characterize exactly which Turing degrees contain grounded extensions of computable AFs. Is it the degrees of sets which are \uswp?

	\bibliographystyle{eptcs}
	\bibliography{biblio}

	\appendix
	\section{Additional details for Section \ref{sec:maximially long and hard}}
	
	\CharacterizeGsInExample*
	\begin{proof}
		We check by an induction on ordinals $\beta$ that $\G_{\beta+1}$ is exactly the set of $a_\sigma$ so that $\sigma$ has rank $\leq \beta$. For $\beta=0$, we note that $\G_1$ is the unattacked elements of $\F_\T$, i.e., the $a_\sigma$ where $\sigma$ is a terminal node on $\T$, meaning $\sigma$ has no immediate extensions in $\T$. These are exactly the $a_\sigma$ where $\sigma$ has rank $\leq 0$.
		
		For successor $\beta=\gamma+1$, $a_\sigma\in \G_{\beta+1}=\G_{\gamma+2}$ if and only if $a_\tau\in \G_{\gamma+1}$ for every immediate extension $\tau$ of $\sigma$. This, by inductive hypothesis is exactly if every $\tau$ an immediate successor of $\sigma$ has rank $\leq \gamma$, which is equivalent to the rank of $\sigma$ being $\leq \beta=\gamma+1$.
		
		
		Finally, if $\beta$ is a limit, then $a_\sigma\in \G_{\beta+1}$ if and only if every $a_\tau$ for $\tau$ an immediate extension of $\sigma$ is in $\G_{\beta}$. Since $\beta$ is a limit, this means every $a_\tau$ for $\tau$ an immediate extension of $\sigma$ is in some $\G_{\delta_\tau+1}$ for some $\delta_\tau <\beta$. But this is equivalent by the inductive hypothesis to the rank of $\tau$ being $\leq \delta_\tau<\beta$ for every $\tau$ an immediate extension of $\sigma$. But this is equivalent to the rank of $\sigma$ being $\leq \beta$. 
	\end{proof}
	
	\HittingAnySuccessor*
	\begin{proof}
		By Theorem \ref{thm:high ranks for trees}, we can take a tree $\T$ with rank $\beta$. Then the grounded extension in $\F_\T$ in Example \ref{exampleAnyComputableOrdinal} is the set of all $a_\sigma$ since every $\sigma\in \T$ is ranked. Further $\G=\G_\alpha$ since every element of $\T$ has rank $\leq \beta$ and $\alpha$ is least such.  
	\end{proof}
	
	\HittingLimitOrdinals*
	\begin{proof}
		If $\alpha$ is a successor ordinal, then the result is established in Theorem \ref{HittingAnySuccessor}, so we suppose that $\alpha$ is a limit ordinal. Let $e$ be a computable index for a linear order $L_e$ isomorphic with $\alpha$. Then by considering the sequence of sets $S_a:=\{x\in L_e : x < a_n\}$ for each $n\in \nat$, we get a uniformly computable sequence of indices $(i_n)_{n\in \omega}$ for computable ordinals $\beta_n$ so that $\alpha = \sup (\{\beta_n : n\in \nat\})$. Using Theorem \ref{thm:high ranks for trees}, we can get a computable sequence $\T_n$ of computable trees so that the rank of $\T_n$ is $\beta_n$. Applying Example \ref{exampleAnyComputableOrdinal}, we get a uniformly computable sequence $\F_{\T_n}$ of AFs with grounding ordinal $\beta_n+1$. Finally, we let $\F$ be the disjoint union of the AFs $\F_{\T_n}$. Then the grounding ordinal of $\F$ is the supremum of $\{\beta_n+1 : n\in \nat\}$, which is $\alpha$ since each $\beta_n$ is $<\alpha$ and $\alpha$ is a limit ordinal. 
	\end{proof}
	
	\allTreesTogether*
	\begin{proof}
		We begin by indexing the computably enumerable (c.e.) presentations of all trees. For each index \( e \), define the tree \( \mathcal{T}_e \) such that \( \sigma \in \mathcal{T}_e \) if and only if \( \phi_e(\sigma)\downarrow = 1 \) for every \( \tau \preceq \sigma \). Note that every computably enumerable tree can be represented as some \( \mathcal{T}_e \), and each \( \mathcal{T}_e \) is indeed a c.e.\ tree.
		
		We construct an argumentation framework \( \mathcal{H} = (A_{\mathcal{H}}, R_{\mathcal{H}}) \) as follows
		$A_{\mathcal{H}} = \{(x, i) : x \in \mathcal{F}_{\mathcal{T}_i}\}$,
		where \( \mathcal{F}_{\mathcal{T}_i} \) denotes the argumentation framework derived from tree \( \mathcal{T}_i \) as in Example \ref{exampleAnyComputableOrdinal}. The attack relation is defined by
		$(x, i) \rightarrow (y, j) \quad \text{if and only if} \quad i = j \text{ and } x \rightarrow y \text{ in } \mathcal{F}_{\mathcal{T}_i}.$
		
		This framework \( \mathcal{H} \) is almost computable: its domain is a computably enumerable set, though not necessarily a computable one. To obtain a fully computable argumentation framework, observe that for any infinite c.e.\ set \( X \), there exists a computable bijection \( \beta : \omega \to X \). We use this to define a new framework \( \mathcal{F} = (A_{\mathcal{F}}, R_{\mathcal{F}}) \), where
		$A_{\mathcal{F}} = \{ a_i : i \in \mathbb{N} \}$,
		and \( a_x \rightarrow a_y \) in \( \mathcal{F} \) if and only if \( \beta(x) \rightarrow \beta(y) \) in \( \mathcal{H} \).
	\end{proof}
	
	\section{Additional details for Section \ref{sec:upper bounds}}
	
	\LargestSelfDefending*
	\begin{proof}
		We first show that $\G^+ \cap X=\emptyset$ whenever $X$ is \badadmissible. Suppose otherwise that $X$ is \badadmissible and some element of $\G$ attacks an element of $X$. Let $\alpha$ be least so that some $a\in \G_\alpha$ attacks an element of $X$. Then since $X$ is \badadmissible, some element $x\in X$ attacks $a$. But since $a\in \G_\alpha$, some element of $\G_\beta$ for $\beta<\alpha$ must attack $x$ contradicting the minimality of $\alpha$. It follows that any \badadmissible extension is contained in the complement of $\G^+$.
		
		Now we show that the complement of $\G^+$ is \badadmissible. If $a$ attacks $x\in A_{\F}\smallsetminus \G^+$, then $a\notin \G$. Thus $a\notin f_{\F}(\G)$. Thus there is some element $b\att a$ so that $b$ is not in $\G^+$, so $b\in A_{\F}\smallsetminus \G^+$ showing that $A_{\F}\smallsetminus \G^+$ is \badadmissible. 
	\end{proof}
	
	\SelfDefendingIFFPath*
	\begin{proof}
		Suppose that $T_S$ has a path $\pi$. Let $X=\mran_S(\pi)$. We show that $X$ is a \badadmissible extension. If $a_n$ attacks an element of $\mran_S(\pi)$, then for some sufficiently large $k$, $\pi(\langle n,k \rangle)>0$ and $a_{\pi(\langle n,k \rangle)-1}\in \mran_S(\pi)$ attacks $a_n$. Thus $X$ is \badadmissible.
		
		Suppose that $S$ is contained in a \badadmissible extension $X$. Then we can define a path $\pi$ in $T_S$ by extending finite strings $\sigma$ according to the following rule: Whenever $\vert \sigma\vert=\langle n,m\rangle$ and $a_n$ attacks some element of $\mran_S(\sigma)$, extend $\sigma$ by $i+1$ for some $i$ with $a_i\in X$ and $a_i\att a_n$. Since $X$ is \badadmissible, we can continue as such infinitely giving a path $\pi$ in $T_S$.
	\end{proof}

	\section{Additional details for Section \ref{sec:conclusions for non-computable AFs}}
	
	We note that the proof of the following theorem uses the machinery of set-theoretic forcing. Loosely speaking, we understand truth in a given model $A$ of KP-set theory by constructing a related larger model $A[G]$ with a convenient property, here that a given AF $\F$ is countable in $A[G]$, and then we use the relationship between $A$ and $A[G]$ to transfer the definition of the grounded extension of $\F$ from $A[G]$ to $A$.
	
	\KPSuffices*
	\begin{proof}
		We first show that there is a formula $\psi(x,y)$ so that whenever $\F$ is an AF in a model $A$ of KP, then $\psi(x,\F)$ defines the grounded extension of $\F$.
		
		We note that our construction of $\T^a$ above requires listing off the elements of $\F$. Thus we will first give a formula $\psi_c$ so that $\psi_c(x,\F)$ defines the grounded extension for any countable $\F$ in a model $A$ of KP. The formula $\psi_c(x,\F)$ says that there exists an ordinal $\alpha$ which is a rank for the tree $\T^x$ as constructed above. Since the construction of $\T^x$ above is computable, this is uniformly defined in the model $A$.
		If $A$ is a model of KP containing $\F$ and $a$ is in the grounded extension of $\F$, then there is an $\F$-computable ordinal $\alpha$ so that $\alpha$ is the rank of $\T^a$. Since $A\models \text{KP}$, this ordinal $\alpha$ is in $A$, so the formula $\psi_c(a,\F)$ will hold.
		On the other hand, if $a$ is not in the grounded extension of $\F$, then $\T^a$ has a path, so no ordinal can be a rank for $\T^a$. Thus $\psi_c(x,\F)$ defines the grounded extension $\F$ in $A$ whenever $\F$ is countable and $A$ models KP. 
		
		
		We now use a forcing argument to extend this to all AFs $\F$ in any model $A$ of KP:
		Suppose $A$ is a model of KP with an infinite set and let $\F$ be an AF in $A$. Then the forcing partial order $Col(\omega,A_\F)$ is an element of $A$, and it is known (see Barwise \cite[Final Lemma in Appendix B]{barwise}) that set forcing preserves admissibility, i.e., the forced model $A[G]$ also models KP, and satisfies the definability theorem in the following sense:
		
		\begin{enumerate}
			\item If $G$ is a filter through $Col(\omega,X)$ meeting every definable-over-$A$ dense subset of $Col(\omega,X)$, then $A[G]$ is a model of KP and $\F$ is countable in $A[G]$. Call such $G$ ``strongly generic over $A$."
			\item Moreover, for each fixed formula $\varphi$ the relation $p\Vdash \varphi(\nu)$, where $p$ ranges over conditions and $\nu$ ranges over $A$-names, is definable in $A$. (Here we read ``$p\Vdash\theta$" as ``For every strongly generic $G$ extending $p$, we have $A[G]\models\theta$.")
		\end{enumerate}
		
		We can now reason as follows. 
		Since $\psi_c(x,\F)$ defines $\G$ whenever $A$ models KP and $\F$ is a countable AF in $A$, \emph{for every generic $G$}, $\psi_c(x,\F)$ defines exactly $\G$ in $A[G]$. Thus, for every $a\in \G$, $\emptyset\Vdash \psi_c(a,\F)$. Similarly, for every $a\notin \G$, $\emptyset\Vdash \neg\psi_c(a,\F)$. Let $\psi(a,\F)$ be the formula whose existence is guaranteed in (2) which defines the condition  $\emptyset\Vdash \psi_c(a,\F)$. Now $\psi(x,\F)$ defines $\G$ for every $A$ modeling KP and every AF $\F$ in $A$.
		
		Now we shift to showing that KP proves that $\psi(x,\F)$ defines a fixed-point of $f_{\F}$ and that for every other definable fixed-point $\rho(x,Y)$, KP proves that $\rho(x,Y)$ contains the set defined by $\psi(x,\F)$. 
		This argument is an application of the completeness theorem for first-order logic. Since we have shown that $\psi(x,\F)$ defines the grounded extension $\G$ in any model of KP, the completeness theorem for first-order logic implies that every first order consequence of this fact is provable in KP. In particular, KP proves that $\psi(x,\F)$ defines a fixed point of $f_{\F}$. Similarly, whenever $\rho(x,Y)$ defines a fixed-point of $f_{\F}$ in a model of KP, it necessarily contains $\G$ since $\G$ is the least fixed-point of $f_{\F}$, and $\G$ is defined by $\psi(x,\F)$. Thus, every model of KP satisfies the statement that whenever $\rho(x,Y)$ defines a fixed-point of $f_{\F}$, then $\rho(x,Y)$ contains $\psi(x,\F)$, and the completeness theorem again shows that KP proves this first-order fact.
	\end{proof}

	\section{Additional details for Section \ref{sec:Turing degrees missing}}
	
	\groundedExtsareUSWP*
	\begin{proof}
		For each $a$, we have the computable tree $\T^a$ so that $a$ is in the grounded extension if and only if $\T^a$ has no path.
		Note that if $a$ is not in the grounded extension, then $\T^a$ has a path computed by $\G^+$. Namely, the path extends $i$ where $i$ is least so that $a_i\notin \G^+$. As in Lemma \ref{lem:badadmissible extension iff path in TS}, we can compute a path through $T_{\{a_i\}}$ by successively either extending by $0$ when in case 2 of Definition \ref{defTreeForSelfDefending} or by $i+1$ where $i$ is least so that $a_i\att a_n$ and $a_i\notin \G^+$. This is uniformly computable from $\G^+$, which is computable from $\G'$.
	\end{proof}
	
	We finally include the proof for Lemma \ref{There are Pi11 degrees which do not contain a USWP}. To do so, we introduce two additional concepts from computability theory: The hyperimmune-free degrees (relativized) and Friedberg jump inversion.
	
	\begin{definition}
		For a fixed Turing degree $d$, a Turing degree $c$ is said to be $d$-hyperimmune-free if whenever $f:\nat\rightarrow \nat$ is a function which is computable from $c$, there exists a $d$-computable function $g:\nat\rightarrow \nat$ so that $\forall x (f(x)\leq g(x))$.
	\end{definition}
	
	The notion of $d$-hyperimmune-freeness captures the notion that, though $c$ may compute non-$d$-computable functions, none of these have faster growth-rates than $d$-computable functions. The following theorem is proven via a standard computability-theoretic construction  for hyperimmune-free sets. We include this proof, which requires only straightforward changes from the original proof by Miller and Martin \cite{MillerMartin} of the existence of hyperimmune-free degrees.
	
	\begin{notation}
		The symbol $\demp$ represents the Turing degree that contains the computable sets.
		
		For a Turing degree $d$, the symbol $d'$ represents the result of applying the Halting jump operator to the Turing degree $d$.
		
		Generally, $d^{(n)}$ is the result of applying the Halting jump operator $n$ times to the Turing degree $d$.
	\end{notation}
	
	\begin{theorem}\label{HIFs exist}
		For any Turing degrees $c\geq d'$, there exists a $d$-hyperimmune-free degree $a$ so $d<_T a<_T c'$, and $a\not\leq_T c$.
	\end{theorem}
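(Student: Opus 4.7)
The plan is to establish this as a relativized Miller--Martin style perfect-tree forcing argument, relativized to $d$ and carried out using $c'$ as oracle. Because $c\geq d'$, the oracle $c'$ computes $d$, computes $c$, and decides $\Sigma^0_2(d)$ questions, which is everything we will need. We work in $2^{<\nat}$ so that compactness is available. We build a nested sequence of $d$-computable perfect trees $T_0\supseteq T_1\supseteq T_2\supseteq\cdots$ together with nodes $\sigma_n\in T_n$, $\sigma_n\prec \sigma_{n+1}$, with $|\sigma_n|\to \infty$, and we let $b=\bigcup_n \sigma_n$. The output degree will be $a=\deg(b\oplus d)$.

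At stage $n$ we act for one of three requirement types, interleaved by priority. First, for each $e$ we must meet the \emph{domination requirement} that, if $\Phi_e^{b\oplus d}$ is total, then it is dominated by a $d$-computable function. The key dichotomy is: given a $d$-computable perfect tree $T$, either (a) there is a $d$-computable perfect subtree $T'\subseteq T$ and an input $m$ with $\Phi_e^{A\oplus d}(m)\uparrow$ for every $A\in [T']$, in which case the requirement is satisfied on $[T']$ by non-totality; or (b) for some $d$-computable perfect subtree $T'\subseteq T$, every $m$ has a level $\ell_m$ in $T'$ at which $\Phi_e^{\sigma\oplus d}(m)\downarrow$ for all $\sigma\in T'$ of length $\ell_m$, and then by compactness the function $g(m)=\max\{\Phi_e^{\sigma\oplus d}(m)+1:\sigma\in T'\text{ at level }\ell_m\}$ is $d$-computable and dominates $\Phi_e^{A\oplus d}$ for every path $A\in [T']$. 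Which case holds, and the corresponding $T'$, is identifiable using a $d'$-oracle and hence using $c'$. Second, for each $e$ we meet the \emph{avoidance requirement} $\Phi_e^c\neq b$: using perfection of the current tree, pick a branching node above $\sigma_{n-1}$ and choose the branch disagreeing with $\Phi_e^c$ at the splitting coordinate (this needs $c$, which is $\leq_T c'$). Third, for each index $e$ of a $d$-partial-computable function $\psi_e$, we meet the \emph{non-$d$-computability requirement} $b\neq \psi_e$ in the same perfect-tree-splitting fashion, using $d\leq_T c'$.

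After all stages, set $a=\deg(b\oplus d)$. Then $d\leq_T a$ by construction and $a\leq_T c'$ since the whole construction is $c'$-computable. The avoidance requirements give $b\not\leq_T c$; because $d\leq_T c$, this forces $b\oplus d\not\leq_T c$, so $a\not\leq_T c$, which in particular gives $a\ne d$ and hence $d<_T a$. The domination requirements give that $a$ is $d$-hyperimmune-free. The non-$d$-computability requirements are actually absorbed by $a\not\leq_T c\geq_T d$, but they are convenient bookkeeping at the tree-level for $b$ itself.

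The main obstacle is the dichotomy lemma for the domination requirement: one must verify that in both cases (a) and (b) the subtree $T'$ can be taken to be both $d$-computable \emph{and} perfect, and that the relevant existential question can be phrased at the $\Sigma^0_2(d)$ level so that $c'$ can decide which horn of the dichotomy holds and uniformly produce an index for $T'$. Once this lemma is in hand, the Kleene--Post style finite-extension bookkeeping, the perfect-tree splittings for the avoidance and non-$d$-computability requirements, and the verification that $a=\deg(b\oplus d)$ has all the stated properties, are essentially routine and proceed exactly as in the original Miller--Martin construction, with $d$ in place of $\demp$ and $c'$ in place of $\demp'$.
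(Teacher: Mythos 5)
Your proposal is correct and follows essentially the same route as the paper: a relativized Miller--Martin forcing with $d$-computable trees carried out computably in $c'$, alternating diagonalization against $c$-computations with the totality-versus-domination dichotomy, whose deciding question the paper places at level $d''\leq_T c'$ (your passing mention of a $d'$-oracle for this step should read $d''$, but your own later accounting via $\Sigma^0_2(d)$ decided by $c'$ is the correct one). The only cosmetic differences are that the paper works with function-trees $T\colon 2^{<\nat}\to 2^{<\nat}$ rather than perfect subtrees of $2^{<\nat}$, and codes $d$ into the generic by interleaving a set $D\in d$ at the base tree $T_0$ rather than passing to $b\oplus d$ at the end.
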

	
	\begin{proof}
		We employ a forcing argument with $d$-computable function-trees.
		\begin{definition}
			A function-tree is a function $T:2^{<\nat}\rightarrow 2^{<\nat}$ so that $\sigma\preceq \rho$ if and only if $T(\sigma)\preceq T(\rho)$.
			
			We say that $\sigma$ is on the tree $T$ if there is some $\alpha$ so that $\sigma=T(\alpha)$.
			We say that $\hat{T}$ refines $T$ if every $\alpha$ which is on $\hat{T}$ is also on $T$.
		\end{definition}
		
		We build a sequence of $d$-computable function-trees $T_i$ for $i\in \nat$ so that $T_{i+1}$ always refines $T_i$. Though each tree is $d$-computable, the sequence will be $c'$-computable.
		
		Then the sequence $T_i(\lambda)$ where $\lambda$ is the empty string satisfies $T_{i}(\lambda)\preceq T_{i+1}(\lambda)$. Let $Y$ be the unique set so that $j\in Y$ if and only if $T_i(\lambda)(j)=1$ for some or equivalently all sufficiently large $i$. This $Y$ will be our hyperimmune-free set. Since the sequence of trees is $c'$-computable, it follows that $Y$ is $c'$-computable.
		
		Fix $D$ to be any set in the degree $d$.
		We begin with $T_0$ being the $d$-computable function-tree $T_0(\sigma)=\sigma(0)D(0)\sigma(1)D(1)\dots \sigma(\vert \sigma\vert -1)D(\vert \sigma\vert -1)$. Since the set $Y$ we construct is a path through $T_0$, we have already ensured that $Y\geq_T d$. At even stages we extend to ensure that $T_{2n}(\lambda)$ is incompatible with $\phi_n^c$. That is, one of $T_{2n-1}(0)$ or $T_{2n-1}(1)$ is incompatible with $\phi_n^c$. Using $c'$ we can determine which. Suppose $T_{2n-1}(0)$ is incompatible with $\phi_n^c$. Then we define $T_{2n}(\sigma)=T_{2n-1}(0^\smallfrown \sigma)$. These stages collectively ensure that $Y$ is not computable from $c$.
		
		At odd stages, we ensure that $Y$ is $d$-hyperimmune-free by forcing that either the $n$th computable function $\phi_n$ is total on $T_{2n+1}$ or that $\phi_n$ is partial on $T_{2n+1}$. In the former case, this will ensure that $\phi_n^Y$ is $d$-computably-dominated. In the latter, this will ensure that $\phi_n^Y$ is partial.
		
		To do so, we define a function-tree $\hat{T}$ as follows: We let $\hat{T}(\lambda)=T_{2n}(\lambda)$. Then for each $\sigma$, we search if there is some $\alpha$ so that $\hat{T}(\sigma)\preceq T_{2n}(\alpha)$ so that $\phi_n^\alpha(|\sigma|)\downarrow$. If found, then we let $\hat{T}(\sigma^\smallfrown i)=T_{2n}(\alpha^\smallfrown i)$ for each $i\in \{0,1\}$.
		Using $d''$, which is $\leq_T c'$, we can check if $\hat{T}$ defines a total function-tree. If so, we let $T_{2n+1}=\hat{T}$. If not, then there is some $\alpha$ and $m$ so that no $\beta\succeq \alpha$ has $\phi_n^\beta(m)\downarrow$. In this case we let $T_{2n+1}(\sigma)=T_{2n}(\alpha^\smallfrown \sigma)$ and we have ensured that $\phi_n^Y(m)\uparrow$.
		
		Finally, we need to check that $Y$ is hyperimmune-free. Fix $e\in \nat$. We must see that if $\phi_e^Y$ is a total function, then it is dominated by a $d$-computable function. If at step $2e+1$ we did not make $T_{2e+1}=\hat{T}$, then we  ensured that $\phi_e^Y(m)$ does not converge, so $\phi_e^Y$ is not a total function. On the other hand, if we made $T_{2e+1}=\hat{T}$, then we ensured that $\phi_e^{\hat{T}(\sigma)}(m)$ converges for each $\sigma$ of length $m+1$. Thus $\phi^Y_e(m)\leq \max\{\phi_e^{\hat{T}(\sigma)}(m) : |\sigma|=m+1\}$. This gives a $d$-computable function (since $\hat{T}$ is $d$-computable) bounding $\phi_e^Y$.
	\end{proof}

	\begin{theorem}[Friedberg \cite{friedberg1957criterion}]\label{FriedbergJumpInversion}
		Let $d$ be any Turing degree above  $\demp'$. Then there exists another Turing degree $c$ so that $d=c'$.
	\end{theorem}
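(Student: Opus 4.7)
The plan is to build $c$ as the union $c = \bigcup_s \sigma_s$ of an increasing sequence of finite binary strings produced by a $d$-computable stage construction. The construction will simultaneously (i) diagonalize at even stages over the programs $\phi_e^{(\cdot)}(e)$ so that $c'$ can be transparently read off the construction, and (ii) code a chosen representative $D \in d$ into designated bits of $c$ at odd stages. The key enabling fact is that $d \geq_T \demp'$, which lets $d$ decide every $\Sigma^0_1$ question that arises along the way.

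Concretely, set $\sigma_0 = \lambda$ and fix $D \in d$. At stage $s = 2e$, given $\sigma_{s-1}$, the $d$-oracle decides the $\Sigma^0_1$ question ``does there exist $\tau \succeq \sigma_{s-1}$ with $\phi_e^{\tau}(e)\downarrow$?''. If yes, let $\sigma_s$ be the first such $\tau$ in a canonical search; if no, set $\sigma_s = \sigma_{s-1}$. At stage $s = 2e+1$, set $\sigma_s = \sigma_{s-1}^\smallfrown D(e)$. The sequence $(\sigma_s)_{s \in \nat}$ is uniformly $d$-computable, and so determines the element $c \in 2^\nat$.

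For $c' \leq_T d$, the construction guarantees that $\phi_e^c(e)\downarrow$ holds if and only if the stage-$2e$ search succeeded: in the ``yes'' case $c$ extends a converging initial segment, while in the ``no'' case no extension of $\sigma_{2e-1}$ (and hence no initial segment of $c$ past $\sigma_{2e-1}$) can make $\phi_e^{(\cdot)}(e)$ converge. Thus $d$ decides $c'(e)$ by simulating the construction up to stage $2e$. For $d \leq_T c'$, observe that $c' \geq_T c$ and $c' \geq_T \demp'$; since the only non-effective step in the construction is a $\Sigma^0_1$ search (decidable from $\demp'$), the oracle $c'$ can re-run the entire construction itself, compute the coding lengths $\ell_e := |\sigma_{2e}|$, and output $D(e) = c(\ell_e)$. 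The main obstacle --- and the reason for interleaving the coding with the diagonalization in exactly this way --- is arranging that the decoding of $D$ use only $c'$ rather than $d$; this is secured precisely because the only ingredient of the construction beyond ``write a bit of $D$'' is a $\Sigma^0_1$ search, which $c'$ can perform on its own.
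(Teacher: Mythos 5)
Your argument is correct: this is the standard finite-extension proof of Friedberg jump inversion, with the $\Sigma^0_1$ diagonalization questions answered by $d\geq_T\demp'$ at even stages and the bits of $D$ coded at odd stages, then recovered from $c\oplus\demp'\leq_T c'$ by re-running the construction. The paper simply cites Friedberg's theorem without proof, so there is no in-paper argument to compare against; your write-up supplies exactly the classical proof (modulo a harmless indexing quirk at stage $0$, where $\sigma_0=\lambda$ and the $e=0$ diagonalization stage coincide).
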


	\PIINOTUSWP*
	\begin{proof}
		Let $Y\geq_T \demp'$ be a $\demp^{(6)}$-computable set which is not $\demp^{(5)}$-computable and is $\demp'$-hyperimmune free; we apply Theorem \ref{HIFs exist} with $d=\demp'$ and $c=\demp^{(5)}$ to get such a set $Y$. Let $e$ be the degree of a jump-inverse of $Y$ as in Theorem \ref{FriedbergJumpInversion}. Since $e$ is computable from $\demp^{(6)}$, every set in $e$ is $\Pi^1_1$.
		
		Let $X$ be any set in $e$. We claim that $X$ is not \uswp. Suppose towards a contradiction that there were a computable sequence of trees $\T_i$ witnessing that $X$ was \uswp. Then $i\in X$ if and only if $Y$ computes a path through $\T_i$. But note that since $Y$ is $\demp'$-hyperimmune free, $Y$ computing a path through $\T_i$ implies that there is a $\demp'$-computable total function $g$ so that $\T_i \cap \{\sigma \in \nat^{<\nat} : \forall n<\vert \sigma \vert\, (\sigma(n)<g(n))\}$ has a path. Thus, $i\in X$ if and only if $\exists g\leq_T \demp' \text{ total so } \T_i^g:= \T_i \cap \{\sigma \in \nat^{<\nat} : \forall n<\vert \sigma \vert \sigma(n)<g(n)\} \text{ has a path}$. 
		We can now apply K\"onig's Lemma, which states that any finitely branching tree which has arbitrarily long strings must also have a path. Since $\T_i^g$ is finitely branching, it having a path is equivalent to it containing arbitrarily long strings. The tree $T_i^g$ is uniformly computed from $g$, thus from $\demp'$. Thus $\demp^{(2)}$ can check the condition that the tree has arbitrarily long strings, and $\demp^{(3)}$ can check that $g$ is total. Finally, $\demp^{(4)}$ can check the existence of a $g$ satisfying the full condition. Thus $X$ is computable in $\demp^{(4)}$ showing that $Y$ is computable in $\demp^{(5)}$, which is a contradiction. 
	\end{proof}

\end{document}